\documentclass{article}

 \usepackage[preprint]{neurips_2026}

\usepackage[utf8]{inputenc} 
\usepackage[T1]{fontenc}    
\usepackage{hyperref}       
\usepackage{url}            
\usepackage{booktabs}       
\usepackage{amsfonts}       
\usepackage{nicefrac}       
\usepackage{microtype}      
\usepackage{xcolor}         
\usepackage{soul}
\usepackage{multirow}
\usepackage{makecell}
\usepackage{booktabs}

\usepackage{amsmath}    
\usepackage{amssymb}    
\usepackage{amsthm}     
\usepackage{bbm}
\usepackage{mathtools}  
\usepackage{hyperref}
\usepackage{cleveref}
\usepackage{subcaption}

\usepackage{comment}
\usepackage{bm}         
\usepackage{mathrsfs}   
\usepackage{enumitem}   
\usepackage{amsthm}

\usepackage{algorithm}          
\usepackage{algpseudocode}
\usepackage{float}              
\usepackage{tikz}
\usepackage{threeparttable}
\usetikzlibrary{positioning, arrows.meta, calc, fit, backgrounds}
\newtheorem{theorem}{Theorem}
\newtheorem{assumption}{Assumption}

\newtheorem{remark}{Remark}

\theoremstyle{definition}
\newtheorem{definition}{Definition}

\newcommand*{\eg}{\emph{e.g.}{}}
\newcommand*{\ie}{\emph{i.e.}{}}

\renewcommand{\d}{\mathrm{d}}
\renewcommand{\P}{\mathbb{P}}

\newcommand{\R}{\mathbb{R}}
\newcommand{\calP}{\mathcal{P}}
\newcommand{\calS}{\mathcal{S}}
\newcommand{\scrX}{\mathscr{X}}
\newcommand{\scrS}{\mathscr{S}}
\newcommand{\scrA}{\mathscr{A}}
\newcommand{\scrU}{\mathscr{U}}

\allowdisplaybreaks

\definecolor{mygreen}{RGB}{160,217,153}
\definecolor{myred}{RGB}{227,141,141}

\definecolor{ForestGreen}{RGB}{34,139,34}

\title{Learning to Test: Physics-Informed Representation for Dynamical Instability Detection}

\author{%
  Minxing Zheng\\
  Carnegie Mellon University\\
  \And
  Zewei Deng\\
  University of Minnesota\\
  \AND
  Liyan Xie\\
  University of Minnesota\\
  \And 
  Shixiang Zhu\\
  Carnegie Mellon University\\
}

\begin{document}

\maketitle
\vspace{-.2in}
\begin{abstract}
Many safety-critical scientific and engineering systems evolve according to differential–algebraic equations (DAEs), where dynamical behavior is constrained by physical laws and admissibility conditions. In practice, these systems operate under stochastically varying environmental inputs, so stability is not a static property but must be reassessed as the context distribution shifts. Repeated large-scale DAE simulation, however, is computationally prohibitive in high-dimensional or real-time settings. This paper proposes a test-oriented learning framework for stability assessment under distribution shift. Rather than re-estimating physical parameters or repeatedly solving the underlying DAE, we learn a physics-informed latent representation of contextual variables that captures stability-relevant structure and is regularized toward a tractable reference distribution. Trained on baseline data from a certified safe regime, the learned representation enables deployment-time safety monitoring to be formulated as a distributional hypothesis test in latent space, with controlled Type I error. By integrating neural dynamical surrogates, uncertainty-aware calibration, and uniformity-based testing, our approach provides a scalable and statistically grounded method for detecting instability risk in stochastic constrained dynamical systems without repeated simulation.
\end{abstract}

\vspace{-.1in}
\section{Introduction}
\vspace{-.1in}

Many scientific and engineering systems evolve according to well-established physical laws while simultaneously operating under hard constraints that must never be violated \cite{kunkel2006differential}. Power grids must satisfy Kirchhoff’s laws at every instant \cite{kundur1994power}; aircraft structures must respect stress–strain compatibility and material constitutive relations \cite{ascher1998computer}; biochemical networks must conserve mass and charge \cite{rao2003stochastic}. A natural mathematical language for describing such systems is that of differential–algebraic equations (DAEs), which couple differential dynamics with algebraic constraints that define a feasible manifold \cite{brenan1995numerical}. Unlike unconstrained ordinary differential equations, DAEs explicitly encode the fact that system trajectories must evolve on structured, physically admissible sets.

In practice, however, these systems are rarely deterministic \cite{arnold2006random}. Beyond internal \emph{physical parameters}, their evolution is continuously influenced by \emph{exogenous randomness}---fluctuating loads in power systems, environmental exposure in materials degradation, or variable operating forces in structural components \cite{khasminskii2011stochastic}. In many DAE-governed systems, such inputs vary stochastically over time, shifting the operating point on the constraint manifold \cite{lee20136, milano2010power, roberts2003random, sobczyk2013stochastic}. Because stability depends jointly on system states and input conditions, changes in the environment can alter both the local geometry and dynamical behavior of the system \cite{chiang1987foundations}. Stability therefore cannot be treated as a one-time certification based on nominal conditions; it must be continuously re-evaluated as inputs evolve \cite{liberzon2003switching}.

Yet repeated stability assessment faces two fundamental challenges. ($i$) From a computational standpoint, each environmental change may require re-solving high-dimensional and often stiff DAE systems or recomputing local linearizations and spectral properties---tasks that quickly become prohibitive in large-scale or real-time settings. ($ii$) From a statistical standpoint, observations are typically available only from a single baseline domain, corresponding to a \emph{singleton} within the safe regime. Such data capture system behavior under limited operating conditions and do not span the full set of stable scenarios. As a result, conventional end-to-end learning approaches that map inputs directly to stability labels can be systematically biased: they perform well near the training distribution but fail under distribution shift, as they lack access to the underlying physical structure governing how stability evolves across domains.

These challenges motivate a central question: how can we assess the stability and safety of constrained dynamical systems under stochastically varying inputs without incurring the cost of repeated full-scale analysis, while also ensuring reliable generalization beyond observed conditions? More broadly, \emph{how can we efficiently detect, with uncertainty quantification, when such a system is approaching or has entered an unsafe regime in a changing environment?}

In this work, we adopt a perspective closely related to decision-focused learning \cite{mandi2024decision}, but tailored to hypothesis testing. Rather than learning a direct mapping from inputs to stability labels or fully identifying physical parameters for repeated simulation, we leverage baseline observations to infer stability-relevant structure of the underlying dynamical system. This enables us to construct a physics-informed representation that generalizes across domains, even when observations are available only from a single safe domain. Our approach is therefore \emph{test-oriented}: the representation is learned not for accurate trajectory prediction per se, but for reliable stability assessment under distribution shift.

\begin{figure}[!t]
    \centering
    \includegraphics[width=\linewidth]{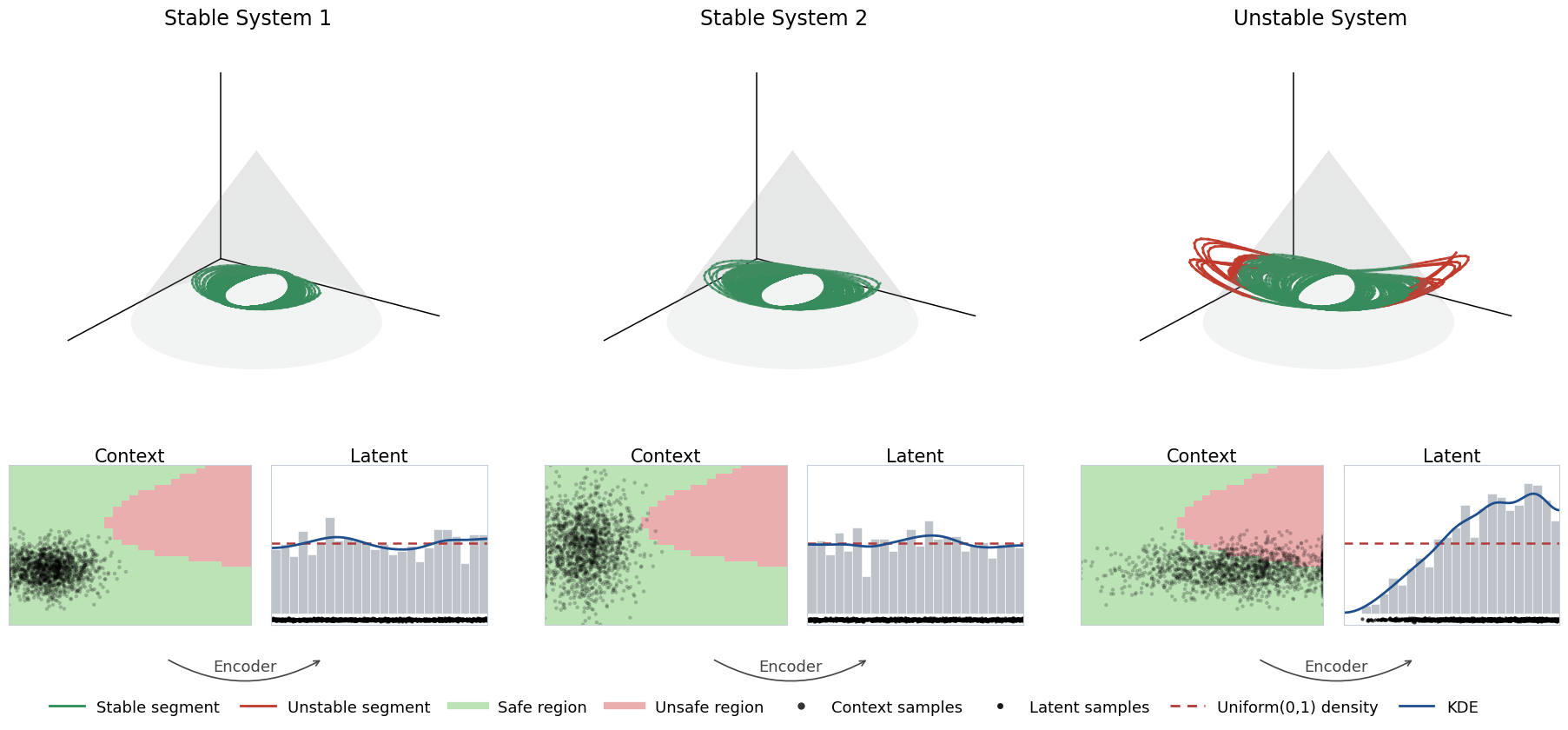}
    \caption{Illustration of Cartesian pendulum systems and their one-dimensional latent representations induced by our method. The dynamical system takes a context variable (\eg, initial conditions or external disturbances) as input and generates the corresponding pendulum trajectory.
    A system is considered \emph{stable} when the context follows the ``safe'' distribution. 
    Representative trajectories under three exogenous context distributions are shown top: green segments satisfy the angular stability criterion, while red segments indicate violations. 
    Under stable regimes, the learned latent representation is approximately uniform; under unstable regimes, it deviates noticeably from uniformity. 
    }
    \label{fig:stable_unstable_latent}
    \vspace{-.2in}
\end{figure}

As illustrated by Figure~\ref{fig:stable_unstable_latent}, our central idea is to learn a latent representation of contextual variables that encodes stability-relevant features of the constrained dynamics, while being regularized toward a simple and tractable reference distribution. Trained on baseline data from a certified safe operating regime, this representation induces a structured latent space in which stability-preserving contexts follow a known reference law.
At deployment, safety monitoring reduces to a distributional hypothesis test in this latent space. If contexts under the new regime remain statistically consistent with the reference distribution, stability is inferred to persist; systematic deviations signal elevated instability risk. By learning the representation end-to-end with respect to this testing objective, we bypass repeated large-scale DAE simulation, extract stability-relevant structure from baseline trajectories, and enable scalable context-only monitoring under stochastic environmental change.

Our framework integrates physics-informed representation learning and distributional testing. We use neural dynamical surrogates to summarize trajectory information into compact embeddings, jointly learn a context encoder aligned with stability geometry, and regularize the latent space to enable a simple goodness-of-fit test. On the theory side, we show that stability verification can be reformulated as a distributional test in the latent space and establish finite-sample exponential error bounds via large deviation theory. Empirically, we validate the approach on two synthetic dynamical systems and a real-world transient stability assessment (TSA) benchmark based on the IEEE 39-bus power system \cite{athay2007practical, sarajcev2022ieee}, demonstrating accurate and scalable detection of instability under distribution shift.

Our work makes three main contributions: ($i$) We introduce a \emph{\ul{test-oriented learning}} framework that directly targets stability verification under distribution shift, bypassing explicit system identification and repeated trajectory simulation. ($ii$) We provide a \emph{\ul{latent reformulation}} of stability, showing that a trajectory-level chance constraint in constrained dynamical systems can be equivalently characterized as a distributional hypothesis test in a learned latent space. ($iii$) We develop a \emph{\ul{physics-informed representation learning}} approach that leverages trajectory data from a single safe domain to capture stability-relevant dynamics, enabling reliable generalization to unseen environments where no trajectory observations are available. Together, these contributions establish a scalable and statistically principled framework for safety assessment in stochastic constrained dynamical systems.

\vspace{-.1in}
\paragraph{Related Work.}

Stability analysis studies how dynamical systems respond to perturbations and establishes conditions under which trajectories remain near equilibria or invariant sets. Classical theory originates from Lyapunov’s seminal work \cite{lyapunov1992general}, including linearization-based methods and Lyapunov function techniques, and has been extended through invariance principles \cite{lasalle1960some}. These tools underpin stability analysis across domains such as power systems \cite{anderson2008power,kundur1994power,kundur2004definition,sauer2017power,wu2023transient}, control \cite{venkatesh2024enhancing,walsh2002stability,wang2010analysis}, transportation \cite{bastin2007lyapunov,kesting2008reaction,lovisari2014stability,sadri2013stability,wang2020stability}, chemical processes \cite{favache2009thermodynamics,wang2011analysis}, and epidemiology \cite{bonzi2011stability,cangiotti2024survey,earn2025global,fall2007epidemiological,korobeinikov2002lyapunov}. These approaches typically assess stability \emph{pointwise} under fixed operating conditions or deterministic perturbations \cite{abbasi2024switched,chiang1987foundations,lovisari2014stability}, while stochastic extensions focus on process noise in state evolution \cite{li2022analysis}. In contrast, we treat exogenous operating conditions as random variables and study stability as a \emph{distributional} property under context shift. Closely related, \cite{papadopoulos2016probabilistic} evaluates transient stability under sampled operating scenarios, but does not address statistical testing or generalization across domains without trajectory observations.

A natural data-driven approach is to learn an end-to-end mapping from context to a binary stability label, followed by a plug-in or binomial test. Many recent scientific machine learning methods can be viewed through this lens: they learn predictive or surrogate models of system behavior using architectures such as RNNs \cite{elman1990finding,rumelhart1986learning}, LSTMs \cite{hochreiter1997long}, and GRUs \cite{cho2014properties,chung2014empirical,yuchi2025new}, continuous-time models such as neural ODEs \cite{bilovs2021neural,chen2018neural,chen2025global,li2023neural} and Latent ODEs \cite{rubanova2019latent}, and physics-informed approaches including PINNs \cite{raissi2019physics}, universal differential equations \cite{rackauckas2020universal}, and DAE-specific architectures \cite{huang2024minn,koch2025learning}. While these methods improve prediction and simulation efficiency, they are typically optimized for pointwise accuracy or system identification, rather than for stability certification under distribution shift.
Such approaches face two fundamental limitations in our setting. First, training data are usually collected from a single baseline domain, corresponding to a \emph{singleton} within the safe regime, and therefore do not capture the diversity of stable operating conditions. As a result, models that learn direct input–output mappings may perform well near the training distribution but fail to generalize when the context shifts \cite{ovadia2019can,rosenfeld2022online,yu2024learning,yuan2022towards,zhang2024single}. Second, reducing stability to a binary label discards rich trajectory-level information, ignoring how instability emerges through the underlying dynamics \cite{rubanova2019latent,xiao2022dynamic,zhang2024trajectory}. Even when surrogate models capture temporal patterns, they are not explicitly aligned with stability geometry and may not preserve the structural invariances required for reliable testing \cite{biggs2023mmd,kim2021classification,kirchler2020two,podkopaev2023sequential}. In contrast, our approach leverages trajectory-level information and physical structure to learn representations that encode stability-relevant dynamics, enabling principled and robust generalization beyond observed domains. 

Our work is also related to learning-to-optimize and decision-focused learning \cite{elmachtoub2022smart,li2016learning,mandi2024decision,wang2025gen}, which train predictive models to perform well on downstream decisions rather than on prediction error alone, including differentiable optimization layers \cite{wilder2019end} and ranking-based approaches \cite{mandi2022decision}. At a high level, our work shares a similar emphasis on aligning model training with the downstream objective of interest, rather than with predictive accuracy alone.
Our work is also connected to the literature on learning-based statistical testing, where learned representations, kernels, or predictive scores are designed to improve the power of downstream hypothesis tests. This includes, for example, representation-based two-sample tests \cite{ginsberg2022learning,kirchler2020two,wang2022innovations} and sequential testing procedures \cite{wei2021goodness,zhang2025recurrent,zhou2025sequential,zhou2026score}.
Our perspective differs from both lines of work in its inferential structure and target. Unlike learning-to-optimize, our downstream object is not an optimization decision, but a hypothesis test outcome. Unlike generic learning-based testing, our learned representation is explicitly constrained by physical laws and trajectory information so as to preserve the dynamical features that determine stability.

\vspace{-.1in}
\section{Problem Setup}
\label{sec:problem_setup}
\vspace{-.1in}

We consider a dynamical system, parameterized by deterministic but {\it unknown} physical parameters $\theta \in \Theta$, with state variables $\{s(t)\}_{t \ge 0}$ that evolves on the time horizon $t\in[0,+\infty]$ according to the following differential--algebraic equation (DAE):
\begin{subequations}\label{eq:dae_main}
\begin{align}
\dot s(t) &= f_\theta\big(s(t), a(t); X\big), t \ge 0; ~ \text{ and}~ s(0) = s_0(\theta, X), \label{eq:dae_main_ode}\\
0 &= g_\theta\big(s(t), a(t); X\big).
\label{eq:dae_main_alg}
\end{align}
\end{subequations}
where $\dot s(t)$ denotes the time derivative of the state $s(t)$.
Here $X \in \scrX$ is a {\it random} exogenous context (fixed once realized) drawn from a distribution,   $a(\cdot)\in \scrA$ is an auxiliary algebraic variable enforcing instantaneous constraints, and $s_0(\cdot,\cdot)$ specifies the initial state of the system at time $t=0$.
For notational clarity, given a context realization $X=x$, we write the resulting state trajectory as $\{s_\theta(t;x)\}_{t\ge 0}\in \scrS$, where $\scrS$ denotes the space of admissible state trajectories, and the associated algebraic variables as $\{a_\theta(t;x)\}_{t\ge 0}$.
We assume that $f_\theta$ and $g_\theta$ are unknown, continuously differentiable in $(s,a)$ and that Eq.~\eqref{eq:dae_main} is well-posed, so that for each $(\theta,x)$ there exists a unique solution $(s_\theta(\cdot;x),a_\theta(\cdot;x))$.
When $g_\theta \equiv 0$ in Eq.~\eqref{eq:dae_main_alg}, the system in Eq.~\eqref{eq:dae_main} reduces to an ordinary differential equation (ODE). 
It is worth emphasizing that the system trajectory $\{s_\theta(t;X)\}$ is stochastic, with randomness induced by the distribution of $X$. Concrete examples are given in Appendix~\ref{app:motivating-exps}.

Let $\eta:\scrS\to\R$ be a user-specified stability functional; for instance, one may take 
$
\eta(s(\cdot))=\max_{t\in[0,T]} |s(t)|
$
which measures the maximum magnitude of the trajectory over the time horizon.
In many applications, stability requires that the system state remain within prescribed limits determined by a tolerance level \cite{lyapunov1992general}.
Given a threshold $\tau>0$, stability under exogenous context distribution can be formulated as a chance constraint on the random trajectory $s_\theta(\cdot;X)$ as in the following definition.
\begin{definition}[$(\tau,\alpha)$-Stability]\label{def:stab} Let $X\sim P$. The system is said to be $(\tau,\alpha)$-\emph{stable} under $P$ if 
\begin{equation}\label{eq:stab}
\P_{X\sim P}\left[ \eta\big(s_\theta(\cdot;X)\big)\le\tau \right] \ge 1-\alpha.  
\end{equation}
\end{definition}
That is, the system satisfies the prescribed stability requirement $\eta(s_\theta(\cdot;X))\le \tau$ with probability at least $1-\alpha$ over the randomness of the context $X$.

\begin{definition}[Safe Regime] 
\label{def:safe_regime}
Let $\calP(\scrX)$ denote the
set of Borel probability measures on the context space $\scrX$. The \emph{safe regime} is the subset of context distributions in $\calP(\scrX)$ for which the induced trajectory satisfies the prescribed chance constraint in Eq.~\eqref{eq:stab}, \ie,
\begin{equation}\label{eq:safe}
\calP_0 \coloneqq 
\left\{
P \in \calP(\scrX):~\text{the system with parameter $\theta$ is $(\tau,\alpha)$-stable under}~P
\right\}.
\end{equation}
\end{definition}

The safe regime $\calP_0$ is assumed to be unknown, as the system's physical parameters $\theta$ and the governing mappings $f_\theta$ and $g_\theta$ are both unknown. We assume access to observations collected from a baseline domain with (unknown) context distribution $P_0\in\calP_0$. 
Specifically, we observe i.i.d. samples $\{(X_i,\calS_i)\}_{i=1}^n$, where $X_i \sim P_0$ and
\[
\calS_i=\{\tilde s_\theta(t;X_i)\}_{t \in [0, T]},
\qquad
\tilde s_\theta(t;X)=s_\theta(t;X)+\varepsilon(t),
\]
with $\varepsilon(t)$ denoting an independent measurement-noise process.  
These data characterize system behavior in a domain where the stability constraint is known to hold, but without revealing the underlying physical parameters or dynamical equations.

We now consider a new domain in which contexts $\{X_j\}_{j=n+1}^{n+m}$ are drawn i.i.d.\ from a potentially new distribution $P_1$, but the corresponding system trajectories are unobserved. 
The key question is whether the stability verified in the baseline domain persists under this distribution shift.
Formally, we determine whether $P_1$ belongs to the safe regime. This leads to the following hypothesis test:
\begin{equation}
\label{eq:hypothesis}
H_0:~ P_1 \in \calP_0
\qquad
\text{vs.}
\qquad
H_1:~P_1 \notin \calP_0.
\end{equation}
Rejecting $H_0$ therefore provides statistical evidence that the system no longer satisfies the prescribed stability requirement under the new context distribution.

\vspace{-.1in}
\section{Proposed Framework: Learning to Test}
\vspace{-.1in}

\begin{figure}[!t]
    \centering
    \includegraphics[width=.9\linewidth]{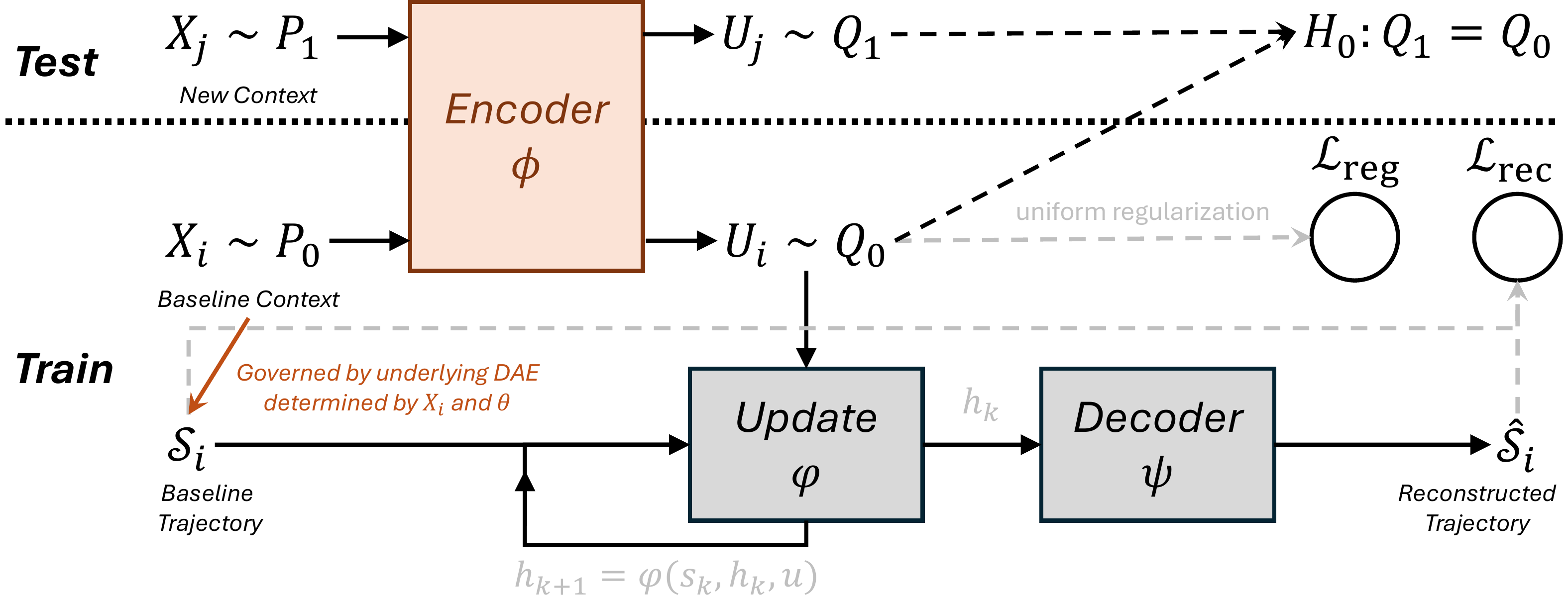}
    \caption{Overview of the proposed ``Learning-to-Test'' (\texttt{L2T}) paradigm. 
    Training phase (bottom): Baseline contexts $X_i \sim P_0$ and their DAE-governed trajectories $\mathcal S_i$ are used to jointly train an encoder $\phi$, dynamical update $\varphi$, and decoder $\psi$. The objectives combine trajectory reconstruction ($\mathcal L_{\mathrm{rec}}$) and latent uniformity regularization ($\mathcal L_{\mathrm{reg}}$), yielding a stability-aware latent representation $U_i \sim Q_0$. 
    Testing phase (top): For new contexts $X_j \sim P_1$, the encoder produces latent variables $U_j \sim Q_1$, and instability detection is reduced to a distributional test between $Q_1$ and $Q_0$ in latent space. 
    }
    \label{fig:architecture}
    \vspace{-.1in}
\end{figure}

We propose an end-to-end framework, termed ``learning to test'' (\texttt{L2T}), that directly targets the hypothesis testing problem in Eq.~\eqref{eq:hypothesis}, rather than first estimating the physical parameters and then simulating state trajectories under the new domain. 
We learn a latent representation of the contextual variable using baseline data so that it captures the distributional factors governing stability of the DAE system. In the learned latent space, contexts associated with stable operation in the baseline domain are regularized to follow a uniform distribution. The original stability assessment under distribution shift is then reformulated as a distributional test: if the latent representations of contexts in the new domain remain uniform, stability is preserved; otherwise, the deviation from uniformity provides evidence of instability. Figure~\ref{fig:architecture} summarizes the proposed \texttt{L2T} paradigm.

This approach offers two advantages: ($i$) Representation learning can be carried out efficiently {\it offline} using baseline data, without requiring repeated simulation in the new domain. ($ii$) By operating directly at the level of the testing objective, the method bypasses the need to estimate physical parameters or repeatedly solve the DAE, focusing instead on detecting shifts relevant to stability.

\vspace{-.05in}
\subsection{Physics-Informed Representation Learning}
\vspace{-.05in}

We use neural ordinary differential equations (ODE) \citep{chen2018neural, zhang2025recurrent} as a flexible surrogate to learn a reduced ODE whose dynamics approximate the evolution of the DAE on the constraint manifold.
Specifically, for a continuous-time series $\{s(t),t\geq 0\}$, we define a compact \emph{history embedding} $h(t) \in \mathcal{H}$,
where $\mathcal{H}$ denotes the embedding space. This history embedding encapsulates the key information about the past observations up to time $t$. The evolution of $h(t)$ is governed by the differential equation below:
\begin{equation}
    \label{eq:ode_h}
    \frac{\d h(t)}{\d t} = \varphi(s(t), h(t); \theta, x),
\end{equation}
where $\varphi$ is an update function that captures, in a reduced form, the effective dynamics induced by the underlying DAE in Eq.~\eqref{eq:dae_main}, jointly determined by the physical parameter $\theta$ and the context $x$.
In practice, we work with $K$ discrete time steps $\{t_k\}_{k=1}^K$ and adopt the Euler method  \citep{chen2018neural} to approximate the solution to Eq.~\eqref{eq:ode_h}:
\begin{equation}\label{eq:ode-euler}
h_{k+1}=h_k+\varphi(s_k, h_k; \theta, x)\Delta t_k, \quad k=1,\ldots,K-1,    
\end{equation}
where $h_k$ and $h_{k+1}$ are the history embeddings at times $t_k$ and $t_{k+1}$, respectively, and $\Delta t_k=t_{k+1}-t_k$ is time difference between them. 

\vspace{-.1in}
\paragraph{Latent Representation.} We seek to construct a latent representation of the contextual variable $X$ such that: $(i)$ its distribution encodes the stability-relevant dynamics of the underlying DAE; and $(ii)$ it is regularized toward a uniform law. 
To this end, we introduce an \emph{encoder}
\[
\mathcal \phi:\scrX \to [0, 1]^d,
\qquad 
u=\phi(x),
\]
which maps each context $x$ to a latent variable $u$. 
Accordingly, we reparameterize the history-embedding update and state reconstruction as
\begin{equation}
\label{eq:h-embedding}
h_{k+1}=\varphi(s_k,h_k,u),\qquad \hat s_k=\psi(h_k).
\end{equation}
Here $\varphi$ denotes the history-update function using the current observation $s_k$, previous history $h_k$, and latent representation $u$, corresponding by a slight abuse of notation to the right-hand side of Eq.~\eqref{eq:ode-euler}; $\psi$ is a \emph{decoder} that maps the history embedding $h_k$ back to the state space. 
The history-update function $\varphi$ can typically be implemented via a recurrent architecture (\eg, recurrent neural network, long-short term memory \cite{yu2019review}, or Transformer \cite{vaswani2017attention}).

\vspace{-.1in}
\paragraph{Training Objectives.}

The encoder $\phi$ is trained jointly with decoder $\psi$ and update function $\varphi$ to construct latent representations that preserve stability-relevant information while remaining distributionally uniform. 
Given the baseline data $\{(x_i,\mathcal S_i)\}_{i=1}^n$, where $\mathcal S_i=\{s_{i,k}\}_{k=1}^{K_i}$ consists of $K_i$ observations and is associated with context $x_i$, the training objective consists of three components:

\begin{enumerate}[leftmargin=*, itemsep=0pt, topsep=0pt, parsep=0pt, partopsep=0pt]
    \item \emph{Reconstruction loss}: To ensure that the latent representation captures the DAE-induced trajectory behavior observed in the baseline regime, we minimize the reconstruction loss:
    \begin{equation}
        \label{eq:loss-reconstruction}
        \mathcal L_{\mathrm{rec}}(\phi, \psi, \varphi) \coloneqq \frac{1}{n}\sum_{i=1}^n \frac{1}{K_i}\sum_{k=1}^{K_i} \|s_{i,k}-\hat s_{i,k}\|^2, 
    \end{equation}
    where $\hat s_{i,k}$ are the reconstructed output in Eq.~\eqref{eq:h-embedding}.
    \item \emph{Latent uniformity regularization}: To facilitate distributional testing in the latent space, we regularize the learned representations toward a uniform reference distribution by minimizing
    \begin{equation}
    \label{eq:loss-uniform}
    \mathcal L_{\mathrm{reg}}(\phi)
    \coloneqq
    \mathrm{KL}\left[p_U ~\middle\|~ \mathrm{Unif}(0,1)^d\right] = - \mathcal H(p_U) + \text{const},
    \end{equation}
    where $\mathrm{KL}[\cdot \| \cdot]$ represents Kullback–Leibler (KL) divergence, $p_U$ denotes the empirical distribution of the latent variables $\{u_i\}_{i=1}^n$ with $u_i=\phi(x_i)$, and $\mathcal H(p)= -\int_{[0,1]^d} p(u)\log p(u)~\d u$ is the differential entropy.
    Minimizing this objective is equivalent to maximizing the entropy of the latent representation, thereby encouraging uniformity. This regularization produces a well-conditioned latent space in which deviations from baseline behavior can be detected via uniformity testing. 
    \item \emph{Stability-aware reweighting}:
    To preserve sensitivity to the stability condition, we augment the reconstruction loss in Eq.~\eqref{eq:loss-reconstruction} with an asymmetric penalty based on the stability metric. For example, using $\eta(s(\cdot)) = \max_{t\in[0,T]} |s(t)|$, the sample-level weight can be defined as
    \begin{align*}
    \omega_{\mathrm{stb}}(s,\hat s)
    =
    \begin{cases}
    \omega, & |s|>\tau \text{ and } \operatorname{sign}(s)\hat s < \operatorname{sign}(s)s,\\
    1, & \text{otherwise},
    \end{cases}
    \end{align*}
    for some $\omega>1$, where $\operatorname{sign}(s)=1$ if $s>0$ and $\operatorname{sign}(s)=-1$ if $s<0$. 
    This weighting scheme distinguishes errors based on whether the trajectory lies in the unstable region and on the direction of the reconstruction error. When $|s|\le\tau$, the trajectory is considered stable and assigned a unit weight. When $s>\tau$ or $s<-\tau$, the weight depends on whether the reconstruction exaggerates or attenuates the instability. In particular, the model is encouraged to overestimate positive instability and underestimate negative instability. This asymmetric design intentionally exaggerates truly unstable trajectories, improving sensitivity near the stability boundary.
\end{enumerate}

The overall objective is therefore
\begin{equation}
    \label{eq:loss_total}
    \mathcal L(\phi, \psi, \varphi) \coloneqq
    \frac{1}{n}\sum_{i=1}^n \frac{1}{K_i}\sum_{k=1}^{K_i} \omega_\mathrm{stb}(s_{i,k}, \hat{s}_{i,k}) \cdot \|s_{i,k}-\hat s_{i,k}\|^2 + \lambda \cdot \mathcal L_{\mathrm{reg}}(\phi),
\end{equation}
where $\lambda \ge 0$ controls the trade-off between reconstruction fidelity and latent uniformity regularization.

In practice, we represent $\phi$, $\psi$, and $\varphi$ using deep neural networks. We minimize the empirical objective $\mathcal L$ using stochastic gradient descent (SGD) or adaptive variants (\eg, Adam \cite{kingma2014adam}). 
Each iteration samples a mini-batch from the baseline data, rolls out the latent dynamics via Eq.~\eqref{eq:h-embedding} to obtain reconstructions, evaluates the three loss components, and updates all parameters by backpropagation. 
We note that the latent uniformity term is implemented by estimating $p_U$ from the mini-batch latent representations and differentiating a tractable surrogate of Eq.~\eqref{eq:loss-uniform} (\eg, via maximum mean discrepancy (MMD) \cite{gretton2012kernel} or a flow matching loss \cite{lipman2022flow, rezende2015variational}), so that gradients can be computed end-to-end. More implementation details are  in Appendix~\ref{append:learning-details}.

\vspace{-.05in}
\subsection{Latent Goodness-of-Fit Test for Dynamic Instability Detection}
\vspace{-.05in}

After training on the baseline domain, we freeze the learned encoder $\phi$ and use it to map contexts into a latent space where distributional testing is convenient. We denote the induced latent random variable as $U=\phi(X)$. 
Intuitively, the encoder is trained so that $U$ retains stability-relevant information (via reconstruction and stability supervision), while being regularized toward a well-conditioned reference law which we denote as $Q^*$ (uniform distribution) due to the latent uniformity regularization. 
At deployment, we observe only new contexts $X_j\sim P_1$; therefore, our goal is to detect whether the latent pushforward distribution $Q_1$ induced by the new context distribution $P_1$, \ie, $Q_1 \coloneqq P_1\circ \phi^{-1}$, has shifted in a stability-relevant manner by comparing $Q_1$ to the reference distribution $Q^*$. 
\begin{assumption}[Latent Sufficiency for Stability]
\label{ass:sufficiency}
We assume that $U$ is sufficient for the decision, in the sense that there exists a measurable function $\rho : [0,1]^d \to \{0,1\}$ such that
\[
\mathbbm{1}\left\{\eta\big(s_\theta(\cdot; X)\big) \le \tau \right\} = \rho(U), \quad \text{a.s.}
\]
That is, the stability event $\{\eta(s_\theta(\cdot; X)) \le \tau\}$ depends on $X$ only through $U$.
\end{assumption}

We provide the following reformulation, which shows that the original stability testing problem can be reduced to testing uniformity in the latent space. The proof is provided in Appendix~\ref{app:thm-proof}.
\begin{theorem}[Reformulation]
\label{thm:reformulation}
There exists a mapping $\phi:\scrX\to\scrU$ such that $U=\phi(X)$ satisfies the Assumption~\ref{ass:sufficiency} and the hypothesis test in \eqref{eq:hypothesis}
is equivalent to the following hypothesis test on the distribution $Q_1$ of $U$:
\begin{equation}
\label{eq:reform-test}
H_0:~ \mathrm{TV}(Q_1,Q^*)\le \alpha/2
\qquad
\text{vs.}
\qquad
H_1:~\mathrm{TV}(Q_1,Q^*) > \alpha/2,
\end{equation}
where $Q^*$ is the uniform distribution.
\end{theorem}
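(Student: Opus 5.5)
The approach is constructive: build an encoder $\phi$ that places the unstable contexts in a fixed region of $[0,1]^d$ with reference mass exactly $\alpha$. Stability then becomes a statement about how much mass $Q_1$ puts on that region, and we relate this mass to $\mathrm{TV}(Q_1,Q^*)$. For concreteness we work with $d=1$; the general case embeds into the first coordinate and fills the remaining coordinates independently and uniformly.

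\textbf{Step 1 (construction).} Set $A=\{x:\eta(s_\theta(\cdot;x))\le\tau\}$. Measurability of $A$ follows from well-posedness and measurability of $\eta$.
\begin{itemize}
\item Fix the stable interval $I_s=[0,1-\alpha)$ and the unstable interval $I_u=[1-\alpha,1]$.
\item Map $A$ into $I_s$ and $A^c$ into $I_u$ using Borel isomorphisms, or quantile transforms of some auxiliary score within each class, so that under the baseline $P_0$ the pieces are as spread out as the construction permits.
\item By construction, $\rho(u)=\mathbbm{1}\{u<1-\alpha\}$ satisfies Assumption~\ref{ass:sufficiency} exactly, not only almost surely.
\end{itemize}
With this encoder,
\[
P\in\calP_0 \iff Q(I_u)\le\alpha, \qquad Q = P\circ\phi^{-1}.
\]

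\textbf{Step 2 (link to TV).} Since $Q^*(I_u)=\alpha$, we always have $\mathrm{TV}(Q_1,Q^*)\ge |Q_1(I_u)-\alpha|$. Under $H_1$, this gives $\mathrm{TV}>0$, but the threshold $\alpha/2$ does not follow directly. So the construction must be tuned so that the null set maps exactly onto $\{\mathrm{TV}\le\alpha/2\}$. The plan is to exploit the freedom in choosing $\phi$ inside each class. Arrange that, within each class, any admissible $Q_1$ has a conditional law that is effectively pinned: send each class to a single point, or to a quantile-normalized representative. Then
\[
\mathrm{TV}(Q_1,Q^*)=\tfrac12\bigl(|Q_1(I_s)-(1-\alpha)|+|Q_1(I_u)-\alpha|\bigr)+\text{(within-class part)}.
\]
Once the within-class part is controlled, $\mathrm{TV}$ becomes a monotone function of the unstable mass $Q_1(I_u)$.

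\textbf{Step 3 (calibrate the threshold).} The most natural way to make $Q_1(I_u)\le\alpha$ equivalent to $\mathrm{TV}\le\alpha/2$ is to redefine the reference geometry:
\begin{itemize}
\item Choose the unstable region $I_u$ with $Q^*(I_u)=\alpha/2$, i.e. tail mass $\alpha/2$ on each of two ends.
\item Encode unstable contexts toward one end, so that excess unstable mass beyond $\alpha$ translates into a TV gap beyond $\alpha/2$.
\end{itemize}
I would then verify both directions directly:
\begin{itemize}
\item If $Q_1(I_u)\le\alpha$, the worst-case TV is at most $\alpha/2$.
\item Conversely, if $Q_1(I_u)>\alpha$, then $\mathrm{TV}>\alpha/2$.
\end{itemize}
A fallback, if a literal general-$Q_1$ equivalence proves false, is to state the equivalence restricted to the family of pushforwards reachable through the constructed $\phi$.

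\textbf{Main obstacle.} The hard part is this calibration. TV is not determined by the unstable mass alone, so off-target shifts inside the stable class could inflate $\mathrm{TV}$ without causing instability. That would break the $\Leftarrow$ direction of the equivalence. I would try first to make the encoder collapse each class to a two-atom-plus-uniform structure, so that the within-class term in Step 2 vanishes identically.
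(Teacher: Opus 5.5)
There is a genuine gap. The step that actually makes the equivalence hold is never carried out. That step is killing the within-class contribution to $\mathrm{TV}(Q_1,Q^*)$, and the concrete devices you suggest for it would fail.

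The paper does not collapse classes; it makes each class \emph{uniform on its own interval under $P_1$}. It assumes $\scrX$ is standard Borel and that the class-conditional laws $P_1(\cdot\mid Y=0)$ and $P_1(\cdot\mid Y=1)$ are nonatomic. It then takes measurable $T_0,T_1$ with $T_y(X)\mid\{Y=y\}\sim\mathrm{Unif}[0,1]$ under $P_1$, and sets $\phi=\tfrac{\alpha}{2}T_0$ on unstable contexts and $\phi=\tfrac{\alpha}{2}+(1-\tfrac{\alpha}{2})T_1$ on stable ones. Then $Q_1$ has density $\tfrac{2p_1}{\alpha}$ on $[0,\alpha/2]$ and $\tfrac{1-p_1}{1-\alpha/2}$ on $(\alpha/2,1]$. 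A short computation gives $\mathrm{TV}(Q_1,Q^*)=|p_1-\alpha/2|$ exactly, and $|p_1-\alpha/2|\le\alpha/2\iff p_1\le\alpha$ because $p_1\ge 0$.

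Your alternatives do not achieve this:
\begin{itemize}
\item Sending each class to a single point makes $Q_1$ purely atomic, so $\mathrm{TV}(Q_1,Q^*)=1$ for every $P_1$ and the test becomes vacuous.
\item A ``two-atom-plus-uniform'' structure has the same defect, since any atoms of $Q_1$ contribute at least their total mass to TV against Lebesgue measure.
\item Spreading each class out under the baseline $P_0$ (your Step 1) gives no control over the conditional laws of $P_1$. That is exactly the loophole you describe in your last paragraph.
\end{itemize}

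The threshold calibration is also not pinned down. With your Step 1 region $Q^*(I_u)=\alpha$, even perfectly uniform within-class laws give $\mathrm{TV}(Q_1,Q^*)=|p_1-\alpha|$. Its sublevel set $\{\mathrm{TV}\le\alpha/2\}$ is $\{\alpha/2\le p_1\le 3\alpha/2\}$, not $\{p_1\le\alpha\}$; a perfectly stable $P_1$ with $p_1=0$ would be rejected. Step 3 then says $Q^*(I_u)=\alpha/2$, ``i.e.\ tail mass $\alpha/2$ on each of two ends.'' That is self-contradictory: two such tails have mass $\alpha$ and reproduce the wrong threshold. The correct choice is a single end interval of reference mass $\alpha/2$. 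Then $\{\mathrm{TV}\le\alpha/2\}=\{|p_1-\alpha/2|\le\alpha/2\}=\{0\le p_1\le\alpha\}$, and the lower constraint is automatic.

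Your Main obstacle is a real point and worth keeping. For a $\phi$ fixed independently of $P_1$, the literal equivalence is false. For example, a $P_1$ with $p_1=0$ whose image under $\phi$ lies in a short subinterval of the stable region has $\mathrm{TV}(Q_1,Q^*)$ close to $1$. The paper avoids this only because its $\phi$ is built from $P_1$'s own class-conditional laws, so the asserted $\phi$ is allowed to depend on $P_1$. That is the missing idea, rather than restricting the family of pushforwards as in your fallback, which would change the statement being proved.
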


\begin{remark}
\label{rem:insight}
Theorem~\ref{thm:reformulation} shows that stability testing can be reduced from a complex and system-dependent problem to a purely distributional test in the latent space. More precisely, the encoder $\phi$ can be constructed so that the latent variable preserves the stability label by mapping every safe context variable to the same reference law $Q^\ast$, while mapping contexts that violate the stability condition to latent laws that depart from $Q^\ast$. Under this construction, the tolerance parameter $\alpha$ determines how much probability mass is mapped to $Q^\ast$ and thus governs the magnitude of the resulting deviation from uniformity.
\end{remark}

While Theorem~\ref{thm:reformulation} establishes an exact equivalence under ideal latent representations, this requirement can be relaxed. In Appendix~\ref{app:approx}, we show that when the learned mapping only approximately satisfies latent sufficiency and calibration, the equivalence continues to hold up to an explicit error tolerance, yielding a robust testing criterion under representation misspecification. 

In the following Theorem, we characterize the type-I and type-II errors of the test induced by the alternative hypothesis in Eq.~\eqref{eq:reform-test}, which is the test that achieves the optimal error exponent and thus characterizes the fundamental limits of testing performance. The proof is provided in Appendix~\ref{app:text-performance}.

\begin{theorem}[Testing Performance]
\label{thm:testing_performance}
{Consider latent samples $U_1,\dots,U_n \overset{\mathrm{i.i.d.}}{\sim} Q$ from the induced distribution $Q$ as in Theorem~\ref{thm:reformulation}, with empirical distribution $\widehat Q_n \coloneqq \frac{1}{n}\sum_{i=1}^n \delta_{U_i}$. Let $\mathrm{KL}(\cdot\|\cdot)$ denote the Kullback--Leibler divergence, and consider the test $\chi_n=\mathbf{1}\{\mathrm{TV}(\widehat Q_n,Q^\ast)>\alpha/2\}.$}
\begin{enumerate}[leftmargin=*, itemsep=0pt, topsep=0pt, parsep=0pt, partopsep=0pt]
    \item Under the null hypothesis  {$Q = Q_0$} with $\mathrm{TV}(Q_0,Q^\ast)\le \alpha/2$, type-I error is upper bounded by
    \[
    \mathbb{P}_{Q_0}\left(\mathrm{TV}(\widehat Q_n,Q^\ast) >  \alpha/2\right)
    \le
    \exp\left(
    -n \inf_{P:~\mathrm{TV}(P,Q^\ast) > \alpha/2} {\mathrm{KL}(P\|Q_0)} + o(n)
    \right).
    \]

    \item Under the alternative hypothesis  $Q = Q_1$  with $\mathrm{TV}(Q_1,Q^\ast)>\alpha/2$, type-II error is upper bounded by
    \[
    \mathbb{P}_{Q_1}\left(\mathrm{TV}(\widehat Q_n,Q^\ast)\le \alpha/2\right)
    \le
    \exp\left(
    -n \inf_{P:~\mathrm{TV}(P,Q^\ast)\le \alpha/2} {\mathrm{KL}(P\|Q_1)} + o(n)
    \right).
    \]
    
\end{enumerate}
\end{theorem}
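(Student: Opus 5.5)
The plan is to derive both bounds from Sanov's theorem for the empirical measure $\widehat Q_n$. Both events are of the form $\{\widehat Q_n\in\Gamma\}$, with $\Gamma_{\mathrm I}=\{P:\mathrm{TV}(P,Q^\ast)>\alpha/2\}$ under $Q=Q_0$ and $\Gamma_{\mathrm{II}}=\{P:\mathrm{TV}(P,Q^\ast)\le\alpha/2\}$ under $Q=Q_1$. The upper half of Sanov gives
\[
\limsup_{n\to\infty}\tfrac1n\log\P_Q(\widehat Q_n\in\Gamma)\le-\inf_{P\in\overline\Gamma}\mathrm{KL}(P\|Q).
\]
This is exactly a bound of the form $\exp(-n\inf\mathrm{KL}+o(n))$, provided we can replace $\overline\Gamma$ by $\Gamma$ in the infimum. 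So the proof reduces to two tasks: choosing a topology in which Sanov applies and in which the map $P\mapsto\mathrm{TV}(P,Q^\ast)$ behaves well, and controlling the boundary $\{\mathrm{TV}=\alpha/2\}$.

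\emph{Discretization.} One point must be addressed first. If $Q^\ast$ is the continuous uniform law on $[0,1]^d$, then $\mathrm{TV}(\widehat Q_n,Q^\ast)=1$ for every atomic $\widehat Q_n$, so the test statistic must be read on a finite partition $\{B_1,\dots,B_M\}$ of $[0,1]^d$. This is the way uniformity is checked in practice, and the TV between the pushforwards of $Q$ and $Q^\ast$ to the bins lower bounds the true TV. I will therefore first prove the statement on the finite alphabet $\{1,\dots,M\}$, with every distribution replaced by its vector of bin probabilities.

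\emph{Finite-alphabet bound.} On a finite alphabet, the method of types gives, for any set $\Gamma$ of distributions (no closure needed),
\[
\P_Q(\widehat Q_n\in\Gamma)\le (n+1)^M\exp\Bigl(-n\inf_{P\in\Gamma}\mathrm{KL}(P\|Q)\Bigr).
\]
The proof is short: there are at most $(n+1)^M$ types, and each type class $T_P$ has probability at most $e^{-n\mathrm{KL}(P\|Q)}$. Applying this to $\Gamma_{\mathrm I}$ with $Q=Q_0$, and to $\Gamma_{\mathrm{II}}$ with $Q=Q_1$, yields parts 1 and 2 with $o(n)=M\log(n+1)$. The hypotheses $\mathrm{TV}(Q_0,Q^\ast)\le\alpha/2$ and $\mathrm{TV}(Q_1,Q^\ast)>\alpha/2$ are used only to note that the exponents are nontrivial. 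For part 2, $Q_1\notin\Gamma_{\mathrm{II}}$ and $\Gamma_{\mathrm{II}}$ is compact, so the infimum is strictly positive. For part 1 the exponent may vanish at the boundary $\mathrm{TV}(Q_0,Q^\ast)=\alpha/2$, and the bound then correctly degenerates.

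\emph{General latent space (main obstacle).} To state the result without binning, I would use Sanov's theorem in the $\tau$-topology, i.e.\ setwise convergence on bounded measurable functions. In that topology $P\mapsto\mathrm{TV}(P,Q^\ast)=\sup_{\|f\|_\infty\le 1/2}|\int f\,\d(P-Q^\ast)|$ is a supremum of continuous maps and hence lower semicontinuous. Consequently $\Gamma_{\mathrm{II}}$ is closed and part 2 follows directly. The obstacle is part 1, where $\Gamma_{\mathrm I}$ is open and Sanov only gives the infimum over $\overline{\Gamma}_{\mathrm I}\subseteq\{\mathrm{TV}\ge\alpha/2\}$. I would close this gap by a mixture argument. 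Take $P$ with $\mathrm{TV}(P,Q^\ast)=\alpha/2$ and $\mathrm{KL}(P\|Q_0)<\infty$, and set $P_\epsilon=(1-\epsilon)P+\epsilon R$, where $R\ll Q_0$ has bounded density and satisfies $\mathrm{TV}(P_\epsilon,Q^\ast)>\alpha/2$; for instance, $R$ can be $Q_0$ restricted to a set where $Q_0$ puts more mass than $Q^\ast$. Convexity gives
\[
\mathrm{KL}(P_\epsilon\|Q_0)\le(1-\epsilon)\mathrm{KL}(P\|Q_0)+\epsilon\,\mathrm{KL}(R\|Q_0)\to\mathrm{KL}(P\|Q_0),
\]
so the two infima coincide. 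If such an $R$ fails to exist in some degenerate case, I would fall back on the finite-partition statement above, which already gives the theorem as used by the test.
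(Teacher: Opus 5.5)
\textbf{The gap: the ``general latent space'' paragraph.} That paragraph claims the unbinned statement, and its key inclusion fails. The inclusion $\overline\Gamma_{\mathrm I}\subseteq\{\mathrm{TV}\ge\alpha/2\}$ would need $P\mapsto\mathrm{TV}(P,Q^\ast)$ to be upper semicontinuous. In the $\tau$-topology, as in the weak one, it is only lower semicontinuous.

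In fact $\overline\Gamma_{\mathrm I}$ is the whole space of probability measures, because atomic measures are dense. Take bounded measurable $f_1,\dots,f_k$ and $\epsilon>0$. Approximate each $f_i$ uniformly within $\epsilon$ by a simple function measurable with respect to one common finite partition $\{B_j\}$. Set $P=\sum_j Q_0(B_j)\delta_{x_j}$ with $x_j\in B_j$. Then $|\int f_i\,\d P-\int f_i\,\d Q_0|\le 2\epsilon$ for every $i$, while $\mathrm{TV}(P,Q^\ast)=1$.

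Hence $Q_0\in\overline\Gamma_{\mathrm I}$ and the Sanov exponent over the closure is $0$. Your mixture step only compares $\{\mathrm{TV}=\alpha/2\}$ with $\{\mathrm{TV}>\alpha/2\}$, so it cannot recover anything.

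No other argument can recover it either. By your own first observation, the unbinned type-I error equals $1$ for every $n$, since $\alpha/2<1$. Now take $Q_0=Q^\ast$. It lies in the null, and it arises in the construction of Theorem~\ref{thm:reformulation} when the instability probability equals $\alpha/2$. For this $Q_0$, Pinsker's inequality gives $\inf_{P:\,\mathrm{TV}(P,Q^\ast)>\alpha/2}\mathrm{KL}(P\|Q^\ast)\ge\alpha^2/2>0$. So part 1, read literally with continuous $Q^\ast$, is false. Part 2 holds there only because its event is empty. You should drop that paragraph and state the theorem for the binned statistic, with TV and KL computed on bin probabilities.

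\textbf{Your finite-alphabet route versus the paper's.} Your finite-alphabet argument is correct, and it is a genuinely different and more careful route than the paper's. The paper's proof just invokes the Sanov upper bound. It writes the exponent as the infimum of $\mathrm{KL}(P\|Q_0)$ over the open rejection region $\{P:\mathrm{TV}(P,Q^\ast)>\alpha/2\}$ itself. It fixes no topology, takes no closure, and never addresses the atomicity of $\widehat Q_n$, so it inherits exactly the problem above. Your method-of-types bound $(n+1)^M\exp(-n\inf_{\Gamma}\mathrm{KL})$ holds for arbitrary $\Gamma$ without any closure, and it gives an explicit $o(n)=M\log(n+1)$.

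One caveat on your positivity remark for part 2. The unbinned hypothesis $\mathrm{TV}(Q_1,Q^\ast)>\alpha/2$ does not in general imply the binned one. In the setting of Theorem~\ref{thm:reformulation} it does whenever the partition refines $\{[0,\alpha/2],(\alpha/2,1]\}$. There $Q_1$ has a density that is constant on each of these intervals, so binned and unbinned TV coincide.
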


In practice, to account for possible deviations from exact uniformity in the learned encodings, we implement the testing procedure via a two-sample test in the latent space. Let $D(\cdot,\cdot)$ denote a discrepancy between distributions on $[0,1]^d$, such as maximum mean discrepancy (MMD) \cite{gretton2012kernel}, energy distance \cite{szekely2013energy}, or a classifier-based statistic \cite{friedman2004multivariate,kim2021classification,lopez2016revisiting}. Using $\{(x_i,\mathcal{S}_i)\}_{i=1}^{n'}$ for training, we reserve a subset of baseline samples $\mathscr U_0=\{U_i\}_{i=n'+1}^{n}$ for calibration and collect latent samples $\mathscr U_1=\{U_j\}_{j=n+1}^{n+m}$ from the new domain. We then compute the empirical discrepancy $\hat D = D(\mathscr U_0,\mathscr U_1)$ and reject the null if $\hat D$ exceeds a threshold. The full procedure is summarized in Algorithm~\ref{alg:l2t_pipeline_twosample} (Appendix~\ref{app:test-algo}).
Compared to directly testing against the ideal uniform reference distribution, the two-sample formulation is more robust computationally: it implicitly calibrates against the learned baseline latent distribution, thereby mitigating errors due to finite-sample training, imperfect uniformization, and model misspecification. As a result, the test focuses on detecting relative shifts in stability-relevant structure and tends to be more reliable in practice.

\begin{figure}[!t]
\centering
\begin{subfigure}{\linewidth}
\centering
\begin{subfigure}{\linewidth}
\centering
\includegraphics[width=\linewidth]{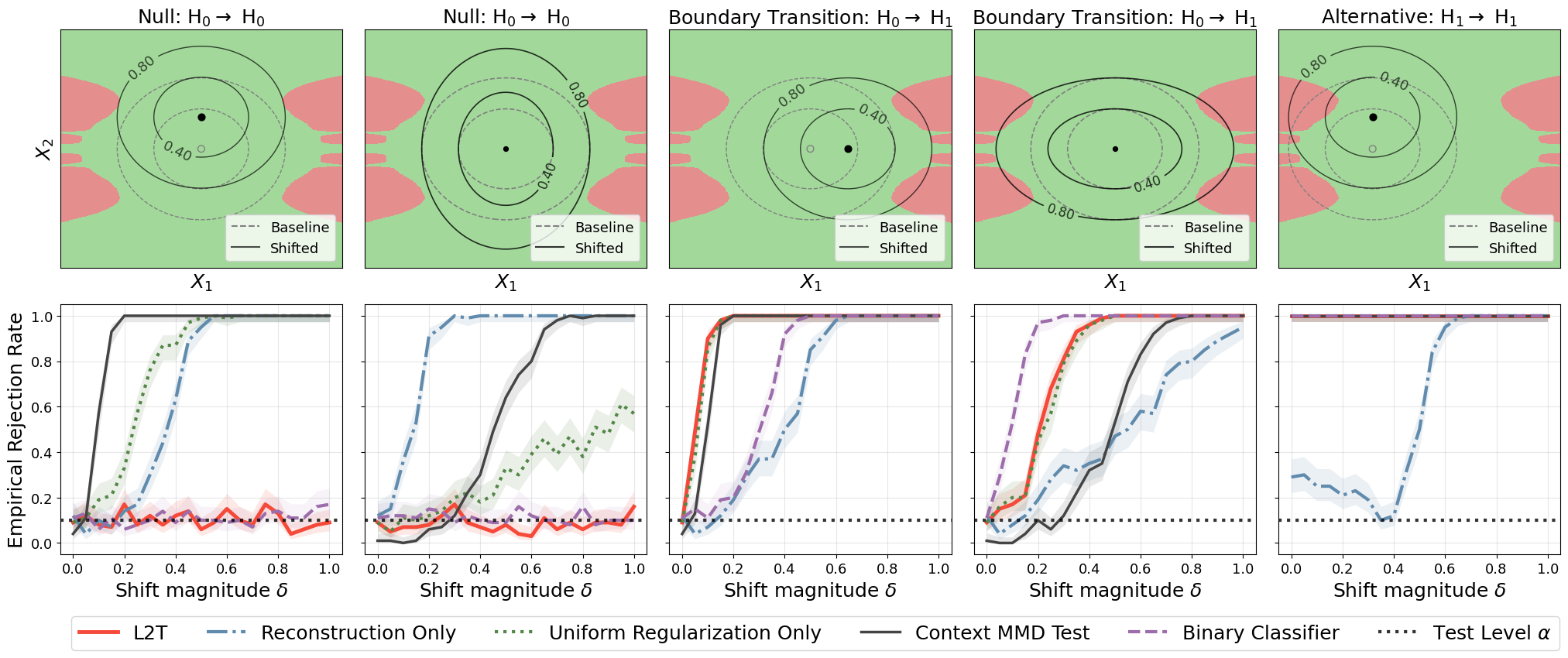}
\caption{Mass--spring--damper system. 
}
\label{fig:spring_result}
\end{subfigure}
\includegraphics[width=\linewidth]{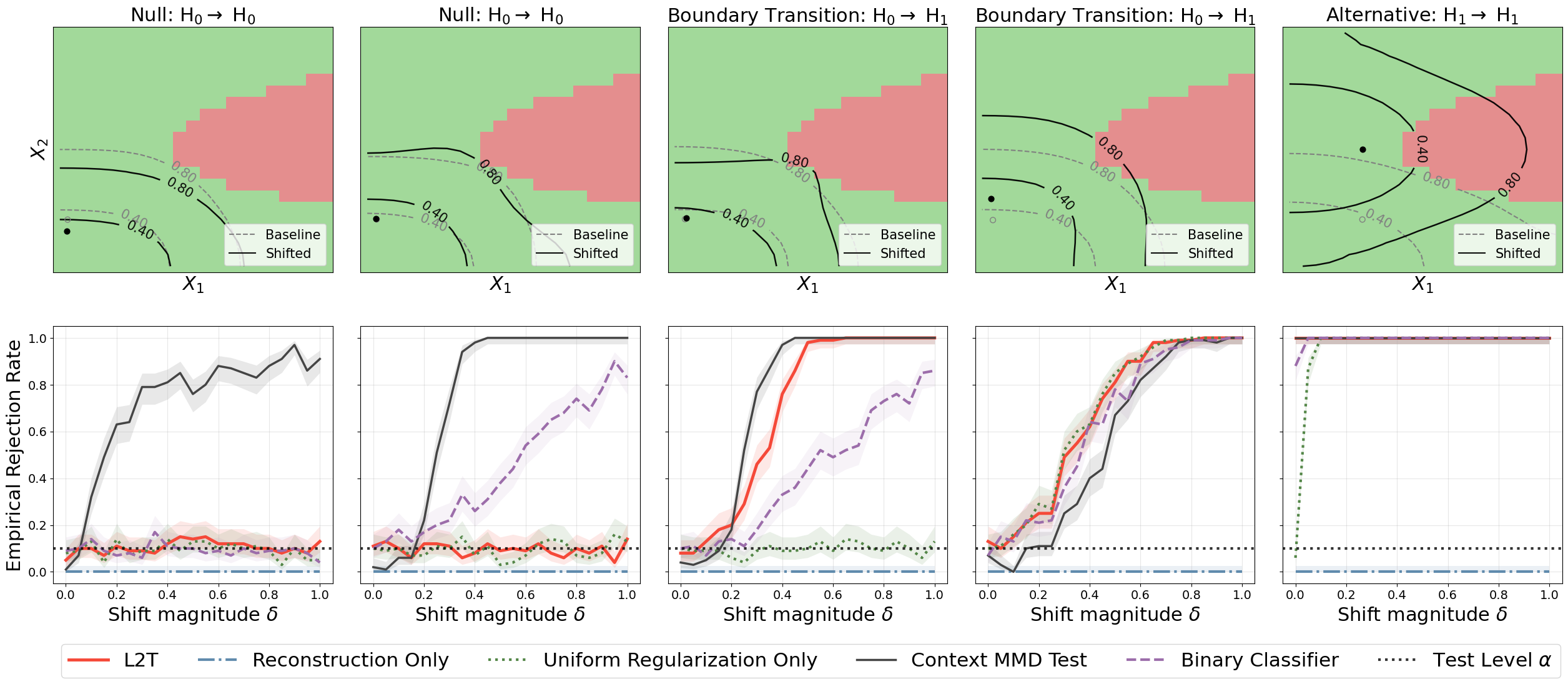}
\caption{Cartesian pendulum system.
}
\label{fig:pendulum_result}
\end{subfigure}
\caption{
Comparison between \texttt{L2T} and other baselines under various distribution shifts. \textcolor{mygreen}{Green} and \textcolor{myred}{red} regions denote stable and unstable contexts, respectively, while dashed gray and solid black contours represent the baseline and shifted distributions.
The top row of each panel shows baseline ($P_0$) and shifted ($P_1$) context distributions in the $2$D space across representative regimes: null ($H_0 \to H_0$), boundary transition ($H_0 \to H_1$), and alternative ($H_1 \to H_1$). 
The bottom row presents empirical rejection rates of \texttt{L2T} and baseline methods, for varying shift magnitudes. 
The dotted horizontal line marks the nominal level $\alpha$. 
The \texttt{L2T} achieves reliable type-I error control while maintaining better detection power near critical stability boundaries.
}
  \label{fig:synthetic_results_combined}
\vspace{-.1in}
\end{figure}

\vspace{-.1in}
\section{Experiments}
\label{sec:experiment}
\vspace{-.1in}

We evaluate the proposed method through synthetic experiments and a realistic power-system benchmark dataset generated from physics-based simulations. The synthetic experiments include two representative dynamical systems: a mass--spring--damper system and a Cartesian pendulum system. In both cases, trajectories are generated under stochastic contexts, and stability is defined via trajectory-level functionals (maximum displacement or swing angle). Distribution shift is introduced through controlled perturbations of the context distribution, including mean, covariance, marginal, and dependence shifts, covering null, boundary-transition, and alternative regimes. 

For the power-system dataset,
we use the IEEE 39-bus transient stability assessment (TSA) benchmark \cite{athay2007practical, sarajcev2022ieee}, consisting of $12{,}852$ simulated trajectories with $50$ features from the IEEE New England 39-bus test system. 
Each trajectory corresponds to generator-level signals under varying operating conditions and fault scenarios, and is labeled as stable or unstable. We train on an imbalanced mixture of stable and unstable trajectories with a 90:10 ratio to reflect the practical setting where training data arise from the safe regime, with unstable samples appearing only rarely. For evaluation, we vary the proportion of unstable samples in the test set from $0\%$ to $50\%$, thereby inducing shifts of increasing instability. Full details are provided in Appendix~\ref{app:exp-conf-results}.

\begin{table}[!t]
\vspace{-.1in}
\centering
\resizebox{\linewidth}{!}{%
\begin{threeparttable}
\caption{Comparison of empirical Type~I and Type~II errors at significance level $0.1$.}
\label{tab:result_summary}
\renewcommand{\arraystretch}{1.15}
\setlength{\tabcolsep}{5pt}
\begin{tabular}{llccccc}
\toprule
\textbf{Dataset} & \textbf{Metric}
& \textsc{MMD-on-$X$}
& \textsc{Binary-clf.}
& \textsc{Recon-only}
& \textsc{Uniform-only}
& {\textbf{\textsc{L2T}}} \\ 
\midrule
\multirow{2}{*}{Spring--mass--damper}
& Type~I $\downarrow$
& $1.00 \pm 0.00$
& $0.17 \pm 0.03$
& $1.00 \pm 0.00$
& $1.00 \pm 0.00$
& $0.12 \pm 0.03$ \\
& Type~II $\downarrow$
& $0.47 \pm 0.05$
& $0.00 \pm 0.00$
& $0.54 \pm 0.05$
& $0.00 \pm 0.00$
& $0.00 \pm 0.00$ \\
\midrule
\multirow{2}{*}{Cartesian pendulum}
& Type~I $\downarrow$
& $1.00 \pm 0.04$
& $0.83 \pm 0.04$
& $0.00 \pm 0.00$
& $0.13 \pm 0.03$
& $0.13 \pm 0.03$ \\
& Type~II $\downarrow$
& $0.00 \pm 0.00$
& $0.56 \pm 0.05$
& $1.00 \pm 0.00$
& $0.90 \pm 0.03$
& $0.02 \pm 0.01$ \\
\midrule
\multirow{2}{*}{IEEE 39-bus TSA}
& Type~I $\downarrow$
& $0.26 \pm 0.05$
& $0.28 \pm 0.05$
& $0.08 \pm 0.03$
& $0.12 \pm 0.04$
& $0.15 \pm 0.04$ \\
& Type~II $\downarrow$
& $0.00 \pm 0.00$
& $0.00 \pm 0.00$
& $0.07 \pm 0.03$
& $0.54 \pm 0.06$
& $0.02 \pm 0.02$ \\
\bottomrule
\end{tabular}
\begin{tablenotes}[flushleft]
\footnotesize
\item Type~I error is reported under the null, and Type~II error is reported at a representative shift magnitude in the boundary-transition regime. For Type~I error, values closer to level $0.1$ indicate better control; for Type~II error, smaller values indicate higher detection power.
\end{tablenotes}
\end{threeparttable}%
} 
\vspace{-.1in}
\end{table}

\vspace{-.1in}
\paragraph{Baselines and Configurations.} 
We compare \texttt{L2T} against four methods: two baseline methods and two ablated variants of the proposed method. The baseline methods are:
($i$) \emph{Context MMD} \cite{gretton2012kernel}: a direct two-sample test on raw contexts $X$, ignoring system dynamics. ($ii$) \emph{Binary classifier}: a supervised model that predicts trajectory stability from contexts $X$ and then conducts a two-sample mean test on the predicted stability probabilities between baseline and deployment samples. 
The ablated variants, each retaining only one component of the training objective, are:
($iii$) \emph{Uniform-only}: a latent representation trained with only uniformity regularization. ($iv$) \emph{Reconstruction-only}: a latent representation trained solely via trajectory reconstruction. 
Together, these methods provide complementary comparisons: the two baselines represent alternative instability detection strategies, while the two ablated variants isolate the contributions of latent representation geometry and dynamics-aware modeling in \texttt{L2T}.
Unless otherwise specified, synthetic experiments use an MMD-based surrogate for the uniformity regularization in 
Eq.~\eqref{eq:loss_total}. Models are trained for $100$ epochs using Adam with learning rate $3\times 10^{-3}$, regularization weight $\lambda=0.01$, and stability-aware weighting parameter $\omega=2$.
For the binary classifier, we use a multi-layer perceptron trained with the same Adam configuration and logistic binary cross-entropy loss. 
Evaluation is based on empirical rejection rates under varying shift magnitudes. Unless otherwise specified, all hypothesis tests are conducted at the significance level $0.1$.

\vspace{-.1in}
\paragraph{Results on Simulation Studies.}
Figure~\ref{fig:spring_result} summarizes results for the mass--spring--damper system, and Figure~\ref{fig:pendulum_result} for the Cartesian pendulum system. 
Table~\ref{tab:result_summary} provides a complementary comparison, reporting empirical Type~I and Type~II errors at one representative shift magnitude for each setting and dataset. In particular, we consider $\delta=1$ for the null setting and $\delta=0.50$ for the boundary-transition setting.
Across all settings, \texttt{L2T} consistently achieves \emph{reliable Type~I error control} under the null, with rejection rates close to the nominal significance level $0.1$, while maintaining \emph{low Type~II error and hence high power} under the alternative, particularly in the boundary-transition regime where detection is most challenging.
In contrast, existing methods exhibit clear failure modes. The context-only test is often overly sensitive to benign null shifts, leading to inflated type-I error. The reconstruction-only variant fails to detect instability reliably, indicating that accurate trajectory reconstruction alone does not induce a representation suitable for stability testing. The uniform-only variant can improve type-I error control in some settings but often has limited power, reflecting the importance of dynamics-aware structure. The binary classifier performs well in some regimes, particularly for the mass--spring--damper system, but is less consistent across datasets and may exhibit either poor type-I error control or weaker sensitivity in more challenging settings.
These results highlight a central insight of our framework: \emph{effective stability testing requires both dynamics-aware representation learning and test-aligned latent geometry}. The proposed method integrates these two aspects, enabling reliable detection even near critical stability boundaries where small shifts in context can induce qualitative changes in system behavior.

\vspace{-.1in}
\paragraph{Results on TSA Benchmark.}
Unlike the synthetic experiments, where distribution shifts can be precisely controlled through parametric perturbations, the TSA data do not permit direct manipulation of the underlying context distribution. We therefore adopt a semi-synthetic evaluation protocol by varying the proportion of unstable trajectories in the test set, ranging from the null regime ($100\%$ stable) to increasingly contaminated mixtures. Table~\ref{tab:result_summary} provides a representative comparison, using a $100{:}0$ stable-to-unstable ratio for the null setting and an $80{:}20$ mixture for the alternative setting. Under the stronger shift ($80{:}20$), most methods achieve near-perfect detection, suggesting that sufficiently large distribution shifts are easily detectable regardless of representation. The key distinction, therefore, lies in type-I error control under the null regime.
In particular, \texttt{L2T} achieves a favorable balance between Type~I error control and detection power. Although it does not attain the smallest Type~I error on this benchmark, it remains reasonably close to the target level while preserving near-perfect power. In contrast, several baselines perform well on only one of the two criteria, either exhibiting inflated Type~I error or suffering a noticeable loss in power. This suggests that \texttt{L2T} yields the most balanced overall performance on the IEEE real-data benchmark.

\vspace{-.1in}
\section{Conclusion}
\vspace{-.1in}

We study stability assessment for constrained dynamical systems under distributional shifts and propose a test-oriented framework that integrates physics-informed representation learning with distributional hypothesis testing. By learning a latent representation that captures stability-relevant dynamics while inducing a test-friendly geometry, we reduced trajectory-level stability verification to a simple distributional test, supported by theoretical guarantees via a reformulation and large-deviation analysis. Empirically, our method achieved reliable type-I error control and strong detection power across synthetic systems and a real-world power system benchmark, particularly near critical stability boundaries where existing approaches struggle.

\begin{ack}
The authors gratefully acknowledges Prof.~Bai Cui (Iowa State University) for insightful discussions that helped shape the problem formulation and identify relevant real-world applications.
\end{ack}

\bibliographystyle{plain}
\bibliography{ref}

\appendix
\newpage

\section{Motivating Examples}
\label{app:motivating-exps}

We highlight two representative application domains where this formulation in Eq.~\eqref{eq:hypothesis} arises naturally.

\noindent\emph{Power systems with stochastic loads or generations}.
Electric power grids must maintain frequency, voltage, and thermal limits while accommodating stochastic demand and renewable generation. Their dynamics are described by the generator swing equations coupled with algebraic power-flow constraints \cite{anderson2008power, kundur1994power, sauer2017power}. The state $s(t)$ includes generator rotor angles and frequencies, while algebraic variables $a(t)$ enforce network power balance and voltage constraints. The exogenous context $X$ captures load profiles, renewable outputs, or network configurations.
A baseline domain corresponds to normal operating conditions under which stability margins are well established. When the context distribution shifts---due to large data-center loads, renewable intermittency, or extreme weather---new operating conditions arise. Full dynamic trajectories under these regimes are typically unavailable, as large-scale disturbance experiments or exhaustive simulations are costly and potentially risky. Instead, operators often observe only contextual information such as load forecasts or topology changes. The proposed framework evaluates whether the probability of violating stability limits over a time horizon remains below a prescribed level under the shifted context distribution, enabling statistically grounded security assessment under emerging grid conditions. 

\noindent\emph{Material degradation and structural integrity}.
Material degradation processes such as corrosion and fatigue crack growth are governed by physical laws that couple time-evolving damage dynamics with thermodynamic and mechanical constraints. These processes are commonly modeled by differential equations for damage evolution together with algebraic constitutive or equilibrium relations \cite{kacher2022insitu, landolt2007corrosion,macdonald1992point,suresh1998fatigue}. The state $s(t)$ may represent corrosion depth or crack length, while the exogenous context $X$ encodes environmental and usage conditions such as humidity, temperature, salinity, or load cycles.
A baseline regime corresponds to operating conditions where degradation behavior and safety margins are well understood. When the environment changes, the distribution of $X$ shifts, yet full degradation trajectories under the new regime are typically unavailable without long-term monitoring or destructive testing. Our framework asks whether the probability that degradation exceeds a critical threshold over a given horizon remains below a target level under the new context distribution, enabling uncertainty-aware maintenance and inspection decisions.

\section{Implementation Details on Physics-Informed Representation Learning}
\label{append:learning-details}

This section describes practical implementations of the three loss components in Eq.~\eqref{eq:loss_total}. Throughout, we train the parameters of the encoder $\phi$, decoder $\psi$, and update function $\varphi$ using mini-batch stochastic gradient methods. Each iteration samples a mini-batch $\mathcal B=\{(x_i,\mathcal S_i)\}_{i\in\mathcal I}$ from the baseline data, computes latent representations $u_i=\phi(x_i)$, rolls out the latent dynamics in Eq.~\eqref{eq:h-embedding} to obtain reconstructions, evaluates the losses on the mini-batch, and updates parameters by backpropagation through the unrolled computation graph.

\emph{Reconstruction loss in Eq.~\eqref{eq:loss-reconstruction}}: The sample size $K_i$ may vary across trajectories (irregular sampling). We either ($i$) pad shorter sequences and apply a mask, or ($ii$) subsample a fixed-length window per trajectory per iteration. Gradients are computed by differentiating through $\hat s_{i,k}=\psi(h_{i,k})$ and the recurrent updates $h_{i,k+1}=\varphi(s_{i,k},h_{i,k},u_i)$.


\emph{Uniformity regularization in Eq.~\eqref{eq:loss-uniform}}:
We encourage the latent representations $u=\phi(x)\in[0,1]^d$ to follow $\mathrm{Unif}(0,1)^d$, so that downstream stability testing reduces to detecting deviations from uniformity in the new domain. We present two practical implementations.

\paragraph{Maximum Mean Discrepancy (MMD) to Uniform.}

Let $\{u_i\}_{i=1}^B$ be latent representations from a mini-batch with $B$ being the sample size of the mini-batch, and let $\{v_j\}_{j=1}^B$ be i.i.d. samples from $v_j\sim \mathrm{Unif}(0,1)^d$. The squared maximum mean discrepancy (MMD) \cite{gretton2012kernel} between $p_U$ and the uniform distribution is
\[
\mathrm{MMD}^2(p_U,\mathrm{Unif})
=
\Big\|\mathbb E_{u\sim p_U}[\kappa(u,\cdot)]-\mathbb E_{v\sim \mathrm{Unif}}[\kappa(v,\cdot)]\Big\|_{\mathscr H_\kappa}^2,
\]
where $\kappa$ is a positive definite kernel and $\mathscr H_\kappa$ is the associated reproducing kernel Hilbert space. Using unbiased (or biased) V-statistic estimators, we implement
\begin{align*}
\mathcal L_{\mathrm{reg}}^{\mathrm{MMD}}
&\coloneqq
\widehat{\mathrm{MMD}}^2(\{u_i\}_{i=1}^B,\{v_j\}_{j=1}^B)\\
&=
\frac{1}{B^2}\sum_{i=1}^B\sum_{i'=1}^B \kappa(u_i,u_{i'})
+
\frac{1}{B^2}\sum_{j=1}^B\sum_{j'=1}^B \kappa(v_j,v_{j'})
-
\frac{2}{B^2}\sum_{i=1}^B\sum_{j=1}^B \kappa(u_i,v_j).
\end{align*}
A common choice is the Gaussian kernel $\kappa(u,u')=\exp(-\|u-u'\|^2/(2\sigma^2))$ with bandwidth $\sigma$ chosen by a median heuristic or a small grid. This loss is fully differentiable in $\{u_i\}$, hence differentiable in $\phi$ via $u_i=\phi(x_i)$.

At each iteration, we proceed as follows: ($i$) compute $u_i=\phi(x_i)$ for $i=1,\dots,B$; ($ii$) sample $v_j\sim\mathrm{Unif}(0,1)^d$ for $j=1,\dots,B$; ($iii$) evaluate the kernel matrices $K_{uu}=[\kappa(u_i,u_{i'})]$, $K_{vv}=[\kappa(v_j,v_{j'})]$, $K_{uv}=[\kappa(u_i,v_j)]$; ($iv$) compute $\mathcal L_{\mathrm{reg}}^{\mathrm{MMD}}$ via the above V-statistic and backpropagate.
This option is typically stable and requires no density estimation when $d$ is small.

\paragraph{Direct Entropy Maximization via a Normalizing Flow.}

An alternative is to model the latent distribution explicitly using a bijective normalizing flow \cite{lipman2022flow, rezende2015variational}. Let $v\in\mathbb R^d$ be a base random variable with known density $p_V$ (\eg, uniform). Let $\mathcal{F}_\beta:\mathbb R^d\to(0,1)^d$ be an invertible map parameterized by $\beta$, and define the latent code as
\[
u = \mathcal{F}_\beta(v), \qquad v\sim p_V.
\]
The induced density of $u$ is given by the change-of-variables formula:
\begin{equation}
\label{eq:cov_flow}
p_U(u) = p_V\big(\mathcal{F}_\beta^{-1}(u)\big)\cdot
\left|\det\nabla_u \mathcal{F}_\beta^{-1}(u)\right|.
\end{equation}
To couple this with the context encoder, we 
parameterize $\phi$ itself as a flow mapping from a simple base distribution conditioned on $x$. A simple and effective approach is to fit a flow to the empirical latent representations and use the flow log-density to approximate differential entropy.

The differential entropy satisfies $\mathcal H(p_U)=-\mathbb E_{u\sim p_U}[\log p_U(u)]$. Using Eq.~\eqref{eq:cov_flow}, we obtain
\begin{align}
\label{eq:entropy_flow}
\mathcal H(p_U)
&= -\mathbb E_{u\sim p_U}\big[\log p_U(u)\big] \nonumber\\
&= -\mathbb E_{u\sim p_U}\Big[\log p_V\big(\mathcal{F}_\beta^{-1}(u)\big) +\log\big|\det\nabla_u \mathcal{F}_\beta^{-1}(u)\big|\Big].
\end{align}
Thus, maximizing entropy is equivalent to minimizing the negative log-likelihood under the flow model. In practice, given a mini-batch $\{u_i\}_{i=1}^B$, we implement
\[
\mathcal L_{\mathrm{reg}}^{\mathrm{flow}} \coloneqq \frac{1}{B}\sum_{i=1}^B \Big[ -\log p_V(v_i) -\log\big|\det\nabla_{u} \mathcal{F}_\beta^{-1}(u_i)\big| \Big], \qquad v_i=\mathcal{F}_\beta^{-1}(u_i),
\]
and differentiate through both $\phi$ (via $u_i=\phi(x_i)$) and the flow parameters $\beta$.

To explicitly enforce $\mathrm{Unif}(0,1)^d$, we can choose the base density $p_V$ to be uniform and select $\mathcal{F}_\beta$ as an invertible transformation on $(0,1)^d$. In this case, minimizing $\mathcal L_{\mathrm{reg}}^{\mathrm{flow}}$ encourages $\{u_i\}$ to match the target law by making them high-likelihood under a flow whose output is constrained to $[0,1]^d$.

At each iteration, we proceed as follows: ($i$) compute $u_i=\phi(x_i)$ for $i=1,\dots,B$; ($ii$) evaluate $v_i=\mathcal{F}_\beta^{-1}(u_i)$ and $\log|\det\nabla_u \mathcal{F}_\beta^{-1}(u_i)|$ via the flow; ($iii$) compute $\mathcal L_{\mathrm{reg}}^{\mathrm{flow}}$ and backpropagate through $(\phi,\beta)$.
Compared with the MMD-based approach, the flow-based formulation provides an explicit parametric density model for $p_U$ and a direct estimate of differential entropy. It scales more naturally to higher-dimensional latent spaces, albeit at the cost of increased computational overhead and additional modeling complexity.

\section{Details of Two-Sample Test Procedure}
\label{app:test-algo}

In this section, we describe the practical implementation corresponding to Theorem~\ref{thm:reformulation}. In the ideal setting, the baseline latent distribution $Q_0$ coincides with the reference distribution $Q^\ast$, and the resulting test reduces to the one-sample formulation in Theorem~\ref{thm:testing_performance}. In practice, however, due to finite-sample training noise, model misspecification, and optimization error, the learned baseline latent distribution typically only approximates $Q^\ast$.

For this reason, we implement the procedure using a two-sample test in the latent space. Let $\{X_i\}_{i=n'+1}^{n} \sim P_0$ denote a held-out set of baseline contexts, and let $\{X_j\}_{j=n+1}^{n+m} \sim P_1$ denote the contexts from the test domain. After training the encoder $\hat\phi$ on $\{(x_i,\mathcal S_i)\}_{i=1}^{n'}$, we map both sets of contexts into the latent space and obtain
\[
\mathscr U_0 = \{\hat\phi(X_i)\}_{i=n'+1}^{n},
\qquad
\mathscr U_1 = \{\hat\phi(X_j)\}_{j=n+1}^{n+m}.
\]
which are samples from the induced latent empirical distributions $\widehat Q_0 = P_0 \circ \hat \phi^{-1}$ and $\widehat Q_1 = P_1 \circ \hat \phi^{-1}$, respectively. We then perform a two-sample test between $\widehat Q_0$ and $\widehat Q_1$. 

Given a discrepancy measure $D(\cdot,\cdot)$, we compute the test statistic $\widehat{\mathcal T} = D(\mathscr U_0,\mathscr U_1),$
and reject the null hypothesis whenever $\widehat{\mathcal T}$ exceeds a calibrated critical value $t_\alpha$, indicating evidence that the system has moved away from the stable baseline regime. The full procedure is summarized in Algorithm~\ref{alg:l2t_pipeline_twosample}.

\begin{algorithm}[!t]
\caption{Learning-to-Test (\texttt{L2T}) with a two-sample test}
\label{alg:l2t_pipeline_twosample}
\begin{algorithmic}[1]
\Require Baseline data $\{(x_i,\mathcal S_i)\}_{i=1}^{n}$; new contexts $\{x_j\}_{j=n+1}^{n+m}$; discrepancy $D$; significance level $\alpha$; split point $n' < n$
\State Train encoder $\hat\phi$ on $\{(x_i,\mathcal S_i)\}_{i=1}^{n'}$
\State $\mathscr U_0 \leftarrow \{\hat\phi(x_i)\}_{i=n'+1}^{n}$,\quad
$\mathscr U_1 \leftarrow \{\hat\phi(x_j)\}_{j=n+1}^{n+m}$
\State $\widehat{\mathcal T} \leftarrow D(\mathscr U_0,\mathscr U_1)$
\State Reject $H_0$ if $\widehat{\mathcal T} > t_\alpha$ \Comment{$t_\alpha$ denotes a level-$\alpha$ critical value; see Remark~\ref{remark:two-sample-test-critical-value} for details.}
\State \Return $\mathbbm{1}\{\widehat{\mathcal T} > t_\alpha\}$
\end{algorithmic}
\end{algorithm}

\begin{remark}
\label{remark:two-sample-test-critical-value}
The choice of the critical value $t_\alpha$ depends on the discrepancy $D$. For example, if $D$ is taken to be the two-sample Kolmogorov--Smirnov statistic, then $t_\alpha$ can be chosen according to its asymptotic null distribution. If $D$ is given by the MMD or the energy distance, then $t_\alpha$ may be chosen by permutation or bootstrap. More generally, any valid testing method that ensures level-$\alpha$ control for the chosen two-sample test can be used.
\end{remark}

\section{Proof of Theorem~\ref{thm:reformulation}}
\label{app:thm-proof}

\begin{proof}
Let
\[
Y\coloneqq\mathbbm{1}\left\{\eta\bigl(s_\theta(\cdot;X)\bigr)\le \tau\right\},
\]
so that $Y=1$ denotes a stable context and $Y=0$ an unstable context. By Definition~\ref{def:stab},
\[
P_1\in \mathcal P_0
\qquad\Longleftrightarrow\qquad
\mathbb P_{P_1}(Y=1)\ge 1-\alpha
\qquad\Longleftrightarrow\qquad
p_1\coloneqq \mathbb P_{P_1}(Y=0)\le \alpha.
\]
Thus, the original hypothesis test is equivalent to
\begin{equation}
\label{eq:p1-test}
H_0:~ p_1 \le \alpha
\qquad\text{vs.}\qquad
H_1:~ p_1 > \alpha.
\end{equation}

We now construct a latent variable $U=\phi(X)\in[0,1]$. Since $\mathcal X$ is standard Borel and the conditional laws $P_1(\cdot\mid Y=0)$ and $P_1(\cdot\mid Y=1)$ are nonatomic, there exist measurable maps
\[
T_0:\mathcal X\to [0,1], \qquad T_1:\mathcal X\to [0,1],
\]
such that
\[
T_0(X)\mid (Y=0)\sim \mathrm{Unif}[0,1],
\qquad
T_1(X)\mid (Y=1)\sim \mathrm{Unif}[0,1].
\]
Define
\begin{equation}
\label{eq:phi-construct}
\phi(X)
=
\begin{cases}
\dfrac{\alpha}{2}~T_0(X), & Y=0,\\[1.2ex]
\dfrac{\alpha}{2}+\Bigl(1-\dfrac{\alpha}{2}\Bigr)T_1(X), & Y=1.
\end{cases}
\end{equation}
Then $U=\phi(X)$ satisfies
\[
U\mid (Y=0)\sim \mathrm{Unif}[0,\alpha/2],
\qquad
U\mid (Y=1)\sim \mathrm{Unif}[\alpha/2,1].
\]
Hence $Y$ is a measurable function of $U$, namely
\[
Y=\rho(U),\qquad \rho(u)=\mathbbm{1}\{u \ge \alpha/2\},
\]
which proves the latent sufficiency claim.

Next, let $Q_1$ denote the law of $U$ under $P_1$, and write $p_1=\mathbb P_{P_1}(Y=0)$. By the above conditional construction, $Q_1$ has density $f_1$ with respect to Lebesgue measure on $[0,1]$:
\[
f_1(u)
=
\frac{2p_1}{\alpha}~\mathbbm{1}_{[0,\alpha/2]}(u)
+
\frac{1-p_1}{1-\alpha/2}~\mathbbm{1}_{(\alpha/2,1]}(u).
\]
On the other hand, the reference law $Q^\ast=\mathrm{Unif}[0,1]$ has density $f^\ast(u)\equiv 1$.

We now compute the total variation distance:
\begin{align*}
\mathrm{TV}(Q_1,Q^\ast)
&=
\frac12\int_0^1 \bigl|f_1(u)-1\bigr|~du\\
&=
\frac12
\left[
\int_0^{\alpha/2}\left|\frac{2p_1}{\alpha}-1\right|~du
+
\int_{\alpha/2}^1\left|\frac{1-p_1}{1-\alpha/2}-1\right|~du
\right].
\end{align*}
Evaluating each term gives
\[
\int_0^{\alpha/2}\left|\frac{2p_1}{\alpha}-1\right|~du
=
\frac{\alpha}{2}\left|\frac{2p_1-\alpha}{\alpha}\right|
=
\left|p_1-\frac{\alpha}{2}\right|,
\]
and
\[
\int_{\alpha/2}^1\left|\frac{1-p_1}{1-\alpha/2}-1\right|~du
=
\left|(1-p_1)-\left(1-\frac{\alpha}{2}\right)\right|
=
\left|p_1-\frac{\alpha}{2}\right|.
\]
Therefore,
\[
\mathrm{TV}(Q_1,Q^\ast)
=
\left|p_1-\frac{\alpha}{2}\right|.
\]
Since $p_1\in[0,1]$, we have
\[
p_1\le \alpha
\qquad\Longleftrightarrow\qquad
\left|p_1-\frac{\alpha}{2}\right|\le \frac{\alpha}{2}.
\]
Combining this with \eqref{eq:p1-test}, we obtain
\[
P_1\in\mathcal P_0
\qquad\Longleftrightarrow\qquad
\mathrm{TV}(Q_1,Q^\ast)\le \alpha/2.
\]
This proves that the original hypothesis test in \eqref{eq:hypothesis} is equivalent to the latent distribution test in \eqref{eq:reform-test}.
\end{proof}

\section{Approximate Reformulation under Imperfect Representations}
\label{app:approx}

The exact reformulation in Theorem~\ref{thm:reformulation} relies on the existence of an ideal latent representation that is both sufficient for stability and perfectly calibrated to a reference distribution. In practice, however, the encoder $\phi$ is learned from finite data and will generally only satisfy these properties approximately.

In this section, we show that the proposed \texttt{L2T} framework remains valid under such approximation errors. Specifically, we quantify how deviations from latent sufficiency and distributional calibration translate into controlled perturbations of the induced testing problem. We establish that the total variation distance between the latent distribution and the reference law continues to characterize the original stability condition up to an explicit error tolerance, resulting in a decision rule with a well-defined ambiguity band. This provides a robustness guarantee for the proposed approach when the learned representation is imperfect.

\begin{assumption}[Approximate latent sufficiency and calibration]
\label{ass:approx}
Let $U=\phi(X)\in[0,1]$ be the learned latent representation, and let
\[
\widehat Y \coloneqq \rho(U) = \mathbbm{1}\{U>\alpha/2\}.
\]
Assume the following hold under the new-domain distribution $P_1$:
\begin{enumerate}[leftmargin=*, itemsep=0pt, topsep=0pt, parsep=0pt, partopsep=0pt]
\item \emph{Approximate latent sufficiency}:
\[
\mathbb P_{P_1}\left( Y \neq \widehat Y \right) \le \varepsilon_{\rm suf}.
\]

\item \emph{Approximate class-conditional calibration}:
writing
$
\widehat p_1 \coloneqq \mathbb P_{P_1}(\widehat Y=0),
$
and letting $\widehat Q_0$ and $\widehat Q_1$ denote the conditional laws of $U$ given $\widehat Y=0$ and $\widehat Y=1$, respectively, we have
\[
\mathrm{TV}\bigl(\widehat Q_0,\mathrm{Unif}[0,\alpha/2]\bigr)\le \varepsilon_0,
\qquad
\mathrm{TV}\bigl(\widehat Q_1,\mathrm{Unif}[\alpha/2,1]\bigr)\le \varepsilon_1.
\]
\end{enumerate}
\end{assumption}

\begin{theorem}[Approximate reformulation]
\label{thm:approx_reform}
Suppose Assumption~\ref{ass:approx} holds. Let
\[
p_1 \coloneqq \mathbb P_{P_1}(Y=0),
\qquad
Q_1 \coloneqq P_1\circ \phi^{-1},
\qquad
Q^\ast=\mathrm{Unif}[0,1].
\]
Then
\begin{equation}
\label{eq:approx-main-bound}
\left|
\mathrm{TV}(Q_1,Q^\ast)-\left|p_1-\frac{\alpha}{2}\right|
\right|
\le
\varepsilon_{\rm suf}+\widehat p_1\varepsilon_0+(1-\widehat p_1)\varepsilon_1.
\end{equation}
In particular, defining
$
\bar\varepsilon
\coloneqq
\varepsilon_{\rm suf}+\max\{\varepsilon_0,\varepsilon_1\},
$
we have the simpler bound
\begin{equation}
\label{eq:approx-main-bound-simple}
\left|
\mathrm{TV}(Q_1,Q^\ast)-\left|p_1-\frac{\alpha}{2}\right|
\right|
\le \bar\varepsilon.
\end{equation}

Consequently, if ($i$)
$
\mathrm{TV}(Q_1,Q^\ast)\le \alpha/2-\bar\varepsilon,
$
then $P_1\in\mathcal P_0$;
($ii$) if
$
\mathrm{TV}(Q_1,Q^\ast)> \alpha/2+\bar\varepsilon,
$
then $P_1\notin\mathcal P_0$.

Hence, the exact equivalence in Theorem~\ref{thm:reformulation} is replaced by an
\emph{ambiguity band} of width $2\bar\varepsilon$ around the threshold $\alpha/2$.
\end{theorem}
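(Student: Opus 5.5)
The plan is to compare $Q_1$ with an idealized mixture and to bound the total variation distance in two steps: first the calibration error, then the sufficiency error.

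Define the reference mixture
\[
\widetilde Q_1 \coloneqq \widehat p_1\,\mathrm{Unif}[0,\alpha/2] + (1-\widehat p_1)\,\mathrm{Unif}[\alpha/2,1].
\]
Since $Q_1 = \widehat p_1 \widehat Q_0 + (1-\widehat p_1)\widehat Q_1$ by conditioning on $\widehat Y$, convexity of TV together with Assumption~\ref{ass:approx}(2) gives
\[
\mathrm{TV}(Q_1,\widetilde Q_1)\le \widehat p_1\varepsilon_0+(1-\widehat p_1)\varepsilon_1 .
\]
The density of $\widetilde Q_1$ has exactly the piecewise-constant form treated in the proof of Theorem~\ref{thm:reformulation}, with $p_1$ replaced by $\widehat p_1$. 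Repeating that computation verbatim yields $\mathrm{TV}(\widetilde Q_1,Q^\ast)=|\widehat p_1-\alpha/2|$. The triangle inequality for TV then gives
\[
\bigl|\mathrm{TV}(Q_1,Q^\ast)-|\widehat p_1-\alpha/2|\bigr|\le \widehat p_1\varepsilon_0+(1-\widehat p_1)\varepsilon_1 .
\]

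Next I replace $\widehat p_1$ by $p_1$. Since $\{\widehat Y=0\}\triangle\{Y=0\}\subseteq\{Y\ne\widehat Y\}$, we have $|\widehat p_1-p_1|\le \mathbb P_{P_1}(Y\ne \widehat Y)\le\varepsilon_{\rm suf}$. The reverse triangle inequality gives $\bigl||\widehat p_1-\alpha/2|-|p_1-\alpha/2|\bigr|\le\varepsilon_{\rm suf}$, and combining this with the previous display yields \eqref{eq:approx-main-bound}. The simpler bound \eqref{eq:approx-main-bound-simple} follows because $\widehat p_1\varepsilon_0+(1-\widehat p_1)\varepsilon_1$ is a convex combination and hence at most $\max\{\varepsilon_0,\varepsilon_1\}$.

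For the consequences, recall from the proof of Theorem~\ref{thm:reformulation} that $P_1\in\mathcal P_0$ if and only if $p_1\le\alpha$, which holds if and only if $|p_1-\alpha/2|\le\alpha/2$ (because $p_1\ge0$).
\begin{itemize}
\item If $\mathrm{TV}(Q_1,Q^\ast)\le\alpha/2-\bar\varepsilon$, then $|p_1-\alpha/2|\le \mathrm{TV}(Q_1,Q^\ast)+\bar\varepsilon\le\alpha/2$, so $P_1\in\mathcal P_0$.
\item If $\mathrm{TV}(Q_1,Q^\ast)>\alpha/2+\bar\varepsilon$, then $|p_1-\alpha/2|>\alpha/2$. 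Since $p_1\ge 0$ this forces $p_1>\alpha$, so $P_1\notin\mathcal P_0$.
\end{itemize}

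The computations are all routine. The one step that needs care is the convexity-of-TV bound for the mixture decomposition: I would verify it directly as $\sup_A\bigl|\sum_c w_c(\widehat Q_c(A)-R_c(A))\bigr|\le\sum_c w_c\,\mathrm{TV}(\widehat Q_c,R_c)$, where $R_0=\mathrm{Unif}[0,\alpha/2]$ and $R_1=\mathrm{Unif}[\alpha/2,1]$.
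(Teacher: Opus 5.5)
Your proposal is correct and follows the paper's proof essentially step for step. It uses the same intermediate mixture $\widetilde Q_1$ built from the predicted mass $\widehat p_1$, convexity of TV to absorb the calibration errors, the exact identity $\mathrm{TV}(\widetilde Q_1,Q^\ast)=|\widehat p_1-\alpha/2|$, the $1$-Lipschitz transfer from $\widehat p_1$ to $p_1$ at cost $\varepsilon_{\rm suf}$, and the same sign argument (using $p_1\ge 0$) for the two one-sided conclusions.
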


\begin{proof}
Let
\[
\widehat p_1=\mathbb P_{P_1}(\widehat Y=0).
\]
Since $\widehat Y=\mathbbm{1}\{U\le \alpha/2\}$, the law $Q_1$ of $U$ admits the mixture decomposition
\[
Q_1=\widehat p_1~\widehat Q_0 + (1-\widehat p_1)~\widehat Q_1.
\]
Also, the uniform reference law can be written as
\[
Q^\ast
=
\frac{\alpha}{2}\mathrm{Unif}[0,\alpha/2]
+
\left(1-\frac{\alpha}{2}\right)\mathrm{Unif}[\alpha/2,1].
\]

We first compare $Q_1$ against the ideal mixture built from the predicted label mass:
\[
\widetilde Q_1
\coloneqq
\widehat p_1~\mathrm{Unif}[0,\alpha/2]
+
(1-\widehat p_1)~\mathrm{Unif}[\alpha/2,1].
\]
By convexity of total variation under mixtures,
\begin{align}
\mathrm{TV}(Q_1,\widetilde Q_1)
&=
\mathrm{TV}\Bigl(
\widehat p_1~\widehat Q_0+(1-\widehat p_1)~\widehat Q_1,~
\widehat p_1~\mathrm{Unif}[0,\alpha/2]+(1-\widehat p_1)~\mathrm{Unif}[\alpha/2,1]
\Bigr)
\notag\\
&\le
\widehat p_1~\mathrm{TV}\bigl(\widehat Q_0,\mathrm{Unif}[0,\alpha/2]\bigr)
+
(1-\widehat p_1)~\mathrm{TV}\bigl(\widehat Q_1,\mathrm{Unif}[\alpha/2,1]\bigr)
\notag\\
&\le
\widehat p_1\varepsilon_0+(1-\widehat p_1)\varepsilon_1.
\label{eq:step1}
\end{align}

Next, because the supports $[0,\alpha/2]$ and $[\alpha/2,1]$ are disjoint, the total variation distance between $\widetilde Q_1$ and $Q^\ast$ is exactly the difference in mixture weights:
\begin{equation}
\label{eq:step2}
\mathrm{TV}(\widetilde Q_1,Q^\ast)
=
\left|\widehat p_1-\frac{\alpha}{2}\right|.
\end{equation}
Indeed, writing the corresponding densities with respect to Lebesgue measure,
\[
\tilde f_1(u)
=
\frac{2\widehat p_1}{\alpha}~\mathbbm{1}_{[0,\alpha/2]}(u)
+
\frac{1-\widehat p_1}{1-\alpha/2}~\mathbbm{1}_{(\alpha/2,1]}(u),
\qquad
f^\ast(u)\equiv 1,
\]
a direct calculation yields
\[
\mathrm{TV}(\widetilde Q_1,Q^\ast)
=
\frac12\int_0^1 |\tilde f_1(u)-1|~du
=
\left|\widehat p_1-\frac{\alpha}{2}\right|.
\]

Combining \eqref{eq:step1}, \eqref{eq:step2}, and the triangle inequality gives
\begin{equation}
\label{eq:predicted-bound}
\left|
\mathrm{TV}(Q_1,Q^\ast)-\left|\widehat p_1-\frac{\alpha}{2}\right|
\right|
\le
\widehat p_1\varepsilon_0+(1-\widehat p_1)\varepsilon_1.
\end{equation}

We now relate $\widehat p_1$ to the true instability probability $p_1=\mathbb P_{P_1}(Y=0)$.
Observe that
\[
|p_1-\widehat p_1|
=
\left|
\mathbb P(Y=0)-\mathbb P(\widehat Y=0)
\right|
\le
\mathbb P(Y\neq \widehat Y)
\le
\varepsilon_{\rm suf}.
\]
Since the map $x\mapsto |x-\alpha/2|$ is $1$-Lipschitz,
\begin{equation}
\label{eq:step3}
\left|
\left|\widehat p_1-\frac{\alpha}{2}\right|
-
\left|p_1-\frac{\alpha}{2}\right|
\right|
\le
|~\widehat p_1-p_1~|
\le
\varepsilon_{\rm suf}.
\end{equation}

Finally, combining \eqref{eq:predicted-bound} and \eqref{eq:step3} yields
\[
\left|
\mathrm{TV}(Q_1,Q^\ast)-\left|p_1-\frac{\alpha}{2}\right|
\right|
\le
\varepsilon_{\rm suf}+\widehat p_1\varepsilon_0+(1-\widehat p_1)\varepsilon_1,
\]
which proves \eqref{eq:approx-main-bound}. The simplified bound
\eqref{eq:approx-main-bound-simple} follows immediately from
\[
\widehat p_1\varepsilon_0+(1-\widehat p_1)\varepsilon_1
\le
\max\{\varepsilon_0,\varepsilon_1\}.
\]

To prove part (a), suppose
\[
\mathrm{TV}(Q_1,Q^\ast)\le \frac{\alpha}{2}-\bar\varepsilon.
\]
Then by \eqref{eq:approx-main-bound-simple},
\[
\left|p_1-\frac{\alpha}{2}\right|
\le
\mathrm{TV}(Q_1,Q^\ast)+\bar\varepsilon
\le \frac{\alpha}{2}.
\]
Since $p_1\in[0,1]$, the inequality $\bigl|p_1-\alpha/2\bigr|\le \alpha/2$ is equivalent to $p_1\le \alpha$. Hence $P_1\in\mathcal P_0$.

For part (b), suppose
\[
\mathrm{TV}(Q_1,Q^\ast)> \frac{\alpha}{2}+\bar\varepsilon.
\]
Again by \eqref{eq:approx-main-bound-simple},
\[
\left|p_1-\frac{\alpha}{2}\right|
\ge
\mathrm{TV}(Q_1,Q^\ast)-\bar\varepsilon
>
\frac{\alpha}{2}.
\]
Since $p_1\ge 0$, the strict inequality $\bigl|p_1-\alpha/2\bigr|>\alpha/2$ implies $p_1>\alpha$. Therefore $P_1\notin\mathcal P_0$.

This establishes the claimed ambiguity band.
\end{proof}

\section{Proof of Theorem~\ref{thm:testing_performance}}
\label{app:text-performance}

\begin{proof}
We prove the two claims separately by applying Sanov's theorem \cite{dembo2009large}. We first define the acceptance and rejection regions in the measure space on $\mathcal U$ as
\[
\mathcal A_0
\coloneqq
\left\{P\in\mathcal P(\mathcal U): \mathrm{TV}(P,Q^\ast)\le \alpha/2\right\},
\qquad
\mathcal A_1
\coloneqq
\left\{P\in\mathcal P(\mathcal U): \mathrm{TV}(P,Q^\ast)> \alpha/2\right\}.
\]
Then the test can be rewritten as
\[
\chi_n = \mathbf{1}\{\widehat Q_n\in \mathcal A_1\}.
\]

\noindent{\it Type-I error bound.}
Under the null hypothesis  {$Q = Q_0$} with $\mathrm{TV}(Q_0,Q^\ast)\le \alpha/2$.
The type-I error probability is
\[
\mathbb P_{Q_0}(\chi_n=1)
=
\mathbb P_{Q_0}\left(\widehat Q_n\in\mathcal A_1\right)
=
\mathbb P_{Q_0}\left(\mathrm{TV}(\widehat Q_n,Q^\ast)>\alpha/2\right).
\]
Apply the large-deviation upper bound from Sanov's theorem yields
\[
\limsup_{n\to\infty}\frac1n
\log
\mathbb P_{Q_0}\left(\widehat Q_n\in \mathcal A_1\right)
\le
-\inf_{P\in \mathcal A_1} \mathrm{KL}(P\|Q_0).
\]
Equivalently,
\[
\mathbb P_{Q_0}\left(\widehat Q_n\in \mathcal A_1\right)
\le
\exp\left(
-n \inf_{P\in \mathcal A_1} \mathrm{KL}(P\|Q_0) + o(n)
\right).
\]
That is,
\[
\mathbb P_{Q_0}\left(\mathrm{TV}(\widehat Q_n,Q^\ast)>\alpha/2\right)
\le
\exp\left(
-n \inf_{P:~\mathrm{TV}(P,Q^\ast)\ge \alpha/2} \mathrm{KL}(P\|Q_0) + o(n)
\right),
\]
which proves the first claim.

\noindent{\it Type-II error bound.}  
Under the alternative hypothesis  $Q = Q_1$  with $\mathrm{TV}(Q_1,Q^\ast)>\alpha/2$.
The type-II error probability is
\[
\mathbb P_{Q_1}(\chi_n=0)
=
\mathbb P_{Q_1}\left(\widehat Q_n\in\mathcal A_0\right)
=
\mathbb P_{Q_1}\left(\mathrm{TV}(\widehat Q_n,Q^\ast)\le \alpha/2\right).
\]
Applying Sanov's theorem directly gives
\[
\limsup_{n\to\infty}\frac1n
\log
\mathbb P_{Q_1}\left(\widehat Q_n\in\mathcal A_0\right)
\le
-\inf_{P\in \mathcal A_0} \mathrm{KL}(P\|Q_1).
\]
Equivalently,
\[
\mathbb P_{Q_1}\left(\widehat Q_n\in\mathcal A_0\right)
\le
\exp\left(
-n \inf_{P\in \mathcal A_0} \mathrm{KL}(P\|Q_1) + o(n)
\right).
\]
Substituting the definition of $\mathcal A_0$, we obtain
\[
\mathbb P_{Q_1}\left(\mathrm{TV}(\widehat Q_n,Q^\ast)\le \alpha/2\right)
\le
\exp\left(
-n \inf_{P:~\mathrm{TV}(P,Q^\ast)\le \alpha/2} \mathrm{KL}(P\|Q_1) + o(n)
\right),
\]
which proves the second claim.
Therefore, both type-I and type-II errors decay exponentially fast. This completes the proof.
\end{proof}

\section{Details of Experimental Configuration and Additional Results}
\label{app:exp-conf-results}

This section provides detailed descriptions of the experimental setups and additional results supporting the main text. We present the full configurations of synthetic and real-data experiments, including system dynamics, context distributions, and distribution shift designs, as well as supplementary empirical results that further illustrate the statistical performance and robustness of the proposed method.

\subsection{Simulation Experiments}
\label{app:synthetic}

This section describes the synthetic experimental setups used to evaluate our method. We consider two representative systems: an unconstrained mass--spring--damper system and a constrained Cartesian pendulum governed by DAEs.

\paragraph{Mass--Spring--Damper System.}
We consider a one-dimensional mass--spring--damper system with state $s(t)\in\R$ evolving as
\[
m \ddot{s}(t) + c \dot{s}(t) + k s(t) = u(t;X), \quad t\in[0,T],
\]
where $\theta=(m,c,k)$ are fixed physical parameters. Unless otherwise specified, we initialize the system at rest with 
$s(0)=0$ and $\dot s(0)=0$. The external forcing is parameterized by a random context $X=(X_1,X_2)\in\scrX$ via
\[
u(t;X)=X_1 \sin(X_2 t),
\]
where $X_1$ controls the forcing magnitude and $X_2$ the oscillation frequency. For a trajectory $s_\theta(\cdot;X)$, we define the stability functional
\[
\eta\big(s_\theta(\cdot;X)\big)=\max_{t\in[t_\ell,t_u]} |s_\theta(t;X)|,
\]
and declare the trajectory stable if $\eta\le\tau$ for a given tolerance $\tau>0$.

The baseline context distribution is $P_0=\mathcal N(\mathbf 0,I_2)$ over $(X_1,X_2)$. We consider two types of distribution shifts:  
($i$) \emph{mean shifts}, $P_1(\delta)=\mathcal N(\mu_0+\delta v, I_2)$ with $\|v\|_2=1$, and  
($ii$) \emph{covariance shifts}, $P_1(\delta)=\mathcal N(\mathbf 0,\Sigma_\delta)$ with
\[
\Sigma_\delta=\begin{pmatrix}1+\delta_1 & 0\\ 0 & 1+\delta_2\end{pmatrix}.
\]
These perturbations induce regimes corresponding to null, boundary-transition, and alternative settings (Table~\ref{tab:shift_settings}).

\paragraph{Cartesian Pendulum System.}
We consider a constrained dynamical system governed by DAEs. Let $s(t)\in\R^3$ denote the bob position and $u(t;X)\in\R^3$ the pivot trajectory. The dynamics are
\begin{subequations}
\begin{align}
m\ddot s(t) &= m g_0 + (s(t)-u(t;X))\,a(t), \quad g_0=(0,0,-g)^\top,\\
0 &= \tfrac12\big(\|s(t)-u(t;X)\|_2^2-\ell^2\big),
\end{align}
\end{subequations}
where $\theta=(m,\ell,g)$ and $a(t)$ enforces the rigid rod constraint. Unless otherwise specified, we initialize the system with bob position
\[
s(0) = (0.15,\,0,\,-0.9887)^\top,
\]
initial velocity
\[
\dot s(0) = (0,\,0.8,\,0)^\top,
\]
and initial pivot position
\[
u(0;X) = (0,\,0,\,0)^\top.
\]

The pivot trajectory is parameterized by a random context $X=(X_1,X_2,X_3)\in\scrX$ via
\[
u(t;X)=X_1 \exp\!\left(-\frac{(t-t_0)^2}{2X_3^2}\right)\sin\big(X_2 (t-t_0)\big)\,q,
\]
where $q=(1,0,0)^\top$. Here $X_1$, $X_2$, and $X_3$ control the excitation magnitude, oscillation frequency, and temporal spread, respectively.

We define stability via the maximum swing angle
\[
\eta(s_\theta(\cdot;X))=\max_{t\in[0,T]} \varphi_\theta(t;X),
\quad
\varphi_\theta(t;X)=\arccos\!\left(
-\frac{(s_\theta(t;X)-u(t;X))^\top e_z}{\ell}
\right),
\]
with $e_z=(0,0,1)^\top$. A trajectory is stable if $\eta\le\tau$ for $\tau\in(0,\pi/2)$. Geometrically, this restricts motion on the sphere of radius $\ell$ to a downward cone, equivalently a bounded region in the horizontal projection.

For the baseline distribution, we fix $X_3=1$ and sample $(X_1,X_2)$ independently using scaled Beta distributions:
\[
X_1 \sim A_{\min} + (A_{\max}-A_{\min})\,\mathrm{Beta}(\alpha_1,\beta_1), \quad [A_{\min},A_{\max}]=[0.1,0.3],
\]
\[
X_2 \sim \omega_{\min} + (\omega_{\max}-\omega_{\min})\,\mathrm{Beta}(\alpha_2,\beta_2), \quad [\omega_{\min},\omega_{\max}]=[1,5],
\]
with $\alpha_1,\beta_1=1,2$ and $\alpha_2,\beta_2=1,4$.
We consider two classes of shifts:  
($i$) \emph{marginal shifts}, induced by changing the Beta parameters of $(X_1,X_2)$, and  
($ii$) \emph{dependence shifts}, induced by introducing correlation $\rho$ between $X_1$ and $X_2$.  
These shifts define null, boundary-transition, and alternative regimes (Table~\ref{tab:shift_settings}).

\begin{table}[!t]
\centering
\caption{Summary of the synthetic shift settings.}
\label{tab:shift_settings}
\footnotesize
\renewcommand{\arraystretch}{1.1}
\resizebox{\linewidth}{!}{%
\begin{tabular}{ccccc}
\toprule
\textbf{Example} & \textbf{Shift} & \textbf{Null} & \textbf{Boundary-transition} & \textbf{Alternative} \\
\midrule
\multirow{2}{*}{Spring}
& Mean
& \makecell{$\mu_0=(0,0)$,\\ $v=(0,1)$}
& \makecell{$\mu_0=(0,0)$,\\ $v=(1,0)$}
& \makecell{$\mu_0=(-1,0)$,\\ $v=(0,1)$} \\[1em]
& Covariance
& \makecell{$\Sigma_\delta=
\begin{pmatrix}
1 & 0\\
0 & 1+\delta
\end{pmatrix}$}
& \makecell{$\Sigma_\delta=
\begin{pmatrix}
1+\delta & 0\\
0 & 1
\end{pmatrix}$}
& \makecell{N/A} \\
\midrule
\multirow{2}{*}{Pendulum}
& Marginal
& \makecell{$(\alpha_1,\beta_1)=\left(1+\tfrac{10}{3}\delta,\ 2+\tfrac{20}{3}\delta\right)$\\
$(\alpha_2,\beta_2)=(1,4)$}
& \makecell{$(\alpha_1,\beta_1)=(1,2)$\\
$\alpha_2=1+1.5\delta,\ \beta_2=4-1.5\delta$}
& \makecell{$(\alpha_1,\beta_1)=(1.5,1.5)$\\
$\alpha_2=1+1.5\delta,\ \beta_2=4-1.5\delta$} \\[1em]
& Dependence
& \makecell{$\rho=-\delta$}
& \makecell{$\rho=\delta$}
& \makecell{N/A} \\
\bottomrule
\end{tabular}%
}
\end{table}

\paragraph{Experimental Configurations.}
Across both synthetic examples, trajectories are simulated over the time interval $[0,T]$ with $T=10$ and discretized using a uniform step size $\Delta t=0.05$, yielding $200$ time points per trajectory. For each setting in Table~\ref{tab:shift_settings}, contexts are drawn from the baseline distribution $P_0$ for training and held-out evaluation, and from the corresponding shifted distribution $P_1$ for testing. We use $5{,}000$ trajectories for training, $1{,}000$ held-out baseline trajectories for evaluation, and $1{,}000$ test trajectories for evaluation under the shifted regime. Unless otherwise specified, the training data are constructed to contain approximately $90\%$ stable and $10\%$ unstable trajectories. The stability threshold is set to $\tau=1$ for the spring system and to $\tau=35^\circ$ (equivalently, $\tau \approx 0.61$ radians) for the pendulum. Throughout the synthetic experiments, we use a one-dimensional latent representation, \ie, $d=1$. All reported means and standard deviations in Figures~\ref{fig:spring_result} and~\ref{fig:pendulum_result}, and in Table~\ref{tab:result_summary}, are computed over $100$ independent runs.

\subsection{Transient Stability Assessment on IEEE 39-Bus System}
\label{app:real-data}

We evaluate our framework on a transient stability assessment (TSA) benchmark generated from the IEEE New England 39-bus power system \cite{Sokolovic2024TSA}. 
The dataset contains $12{,}852$ time-domain simulation trajectories with $50$ features, produced using DIgSILENT PowerFactory \cite{DIgSILENT} with Python-based automation.

To cover diverse operating conditions, generation and load levels are varied from $80\%$ to $120\%$ in $5\%$ increments. 
For each operating point, three-phase short-circuit faults are applied at seven locations ($0\%, 10\%, 20\%, 50\%, 80\%, 90\%, 100\%$) along transmission lines, with clearing times ranging from $0.1$s to $0.3$s. 
Each simulation runs for $10$s, and post-fault trajectories are sampled at $0.01$s resolution from the fault clearance time to $0.6$s afterward.

Each trajectory records generator-level signals for $10$ generators, including active power, terminal voltage, excitation current, rotor speed, and rotor angle (relative to G02, the reference machine). 
Each sample is labeled as stable ($1$) or unstable ($0$), with approximately $42\%$ unstable cases.

\begin{figure*}[t]
    \centering

    \begin{subfigure}[b]{0.3\textwidth}
        \includegraphics[width=\textwidth]{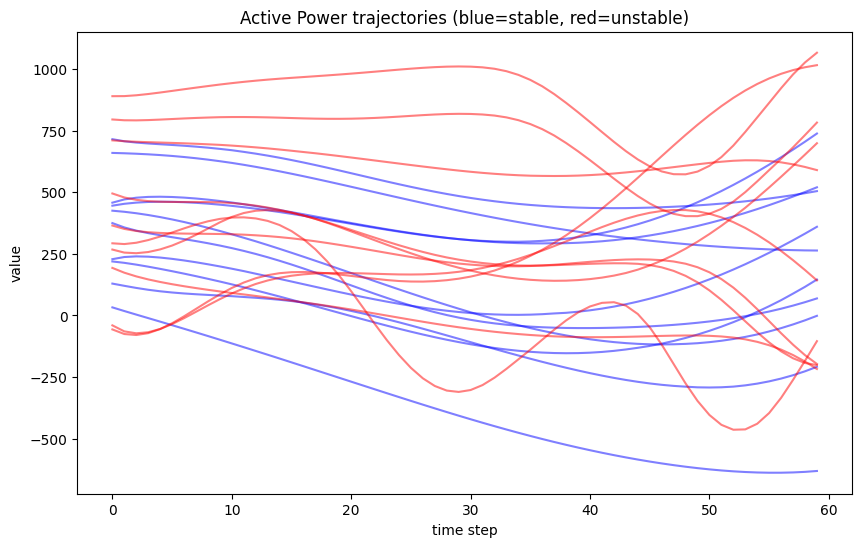}
        \caption{Active Power}
    \end{subfigure}
    \hfill
    \begin{subfigure}[b]{0.3\textwidth}
        \includegraphics[width=\textwidth]{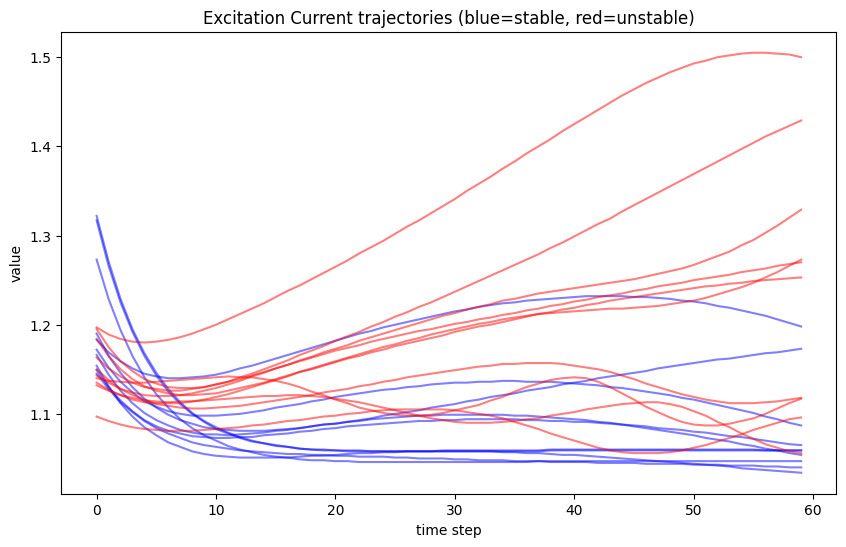}
        \caption{Excitation Current}
    \end{subfigure}
    \hfill
    \begin{subfigure}[b]{0.3\textwidth}
        \includegraphics[width=\textwidth]{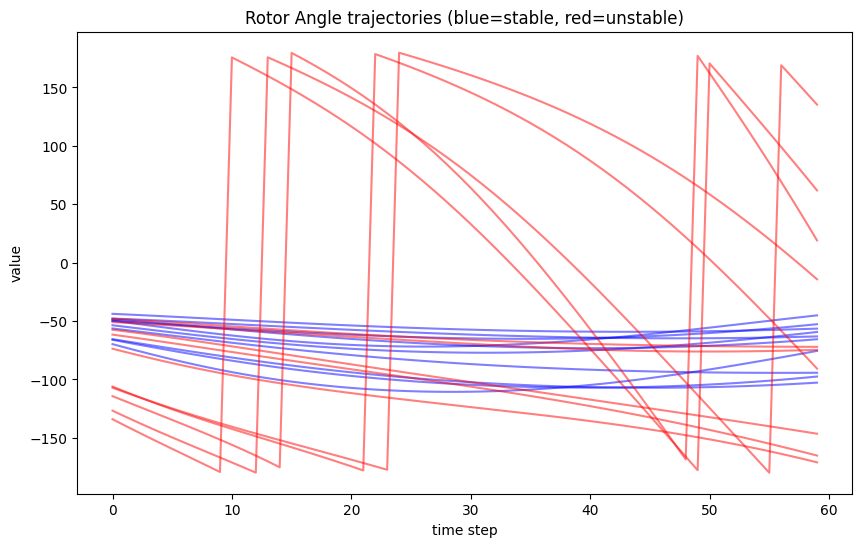}
        \caption{Rotor Angle}
    \end{subfigure}

    \vspace{0.5em}

    \begin{subfigure}[b]{0.3\textwidth}
        \includegraphics[width=\textwidth]{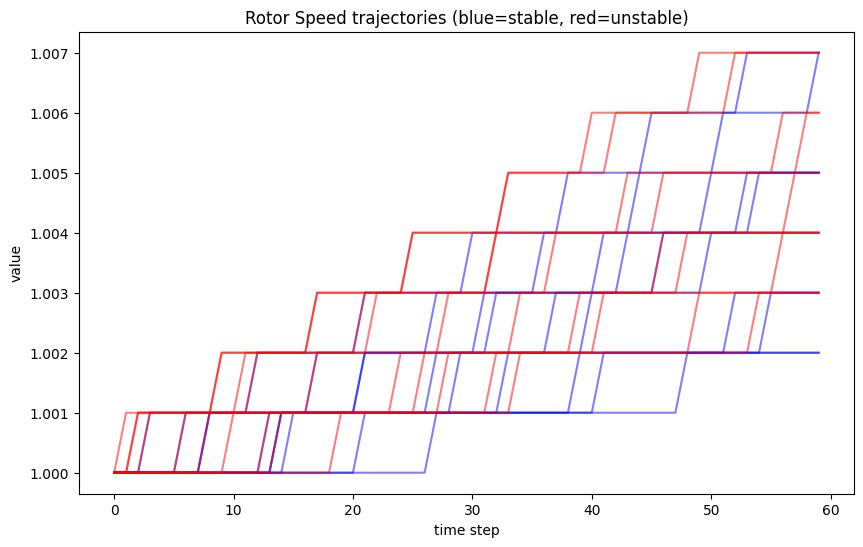}
        \caption{Rotor Speed}
    \end{subfigure}
    \hfill
    \begin{subfigure}[b]{0.3\textwidth}
        \includegraphics[width=\textwidth]{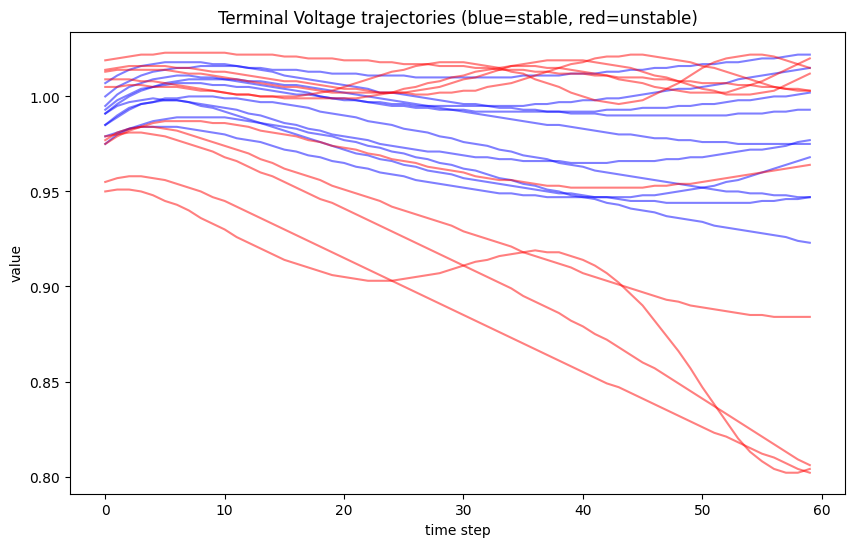}
        \caption{Terminal Voltage}
    \end{subfigure}

    \caption{Trajectory visualization for generator G01. Blue curves denote stable trajectories, while red curves denote unstable ones. Each subplot corresponds to a different system variable: active power, excitation current, rotor angle, rotor speed, and terminal voltage. Stable trajectories exhibit consistent and structured dynamics, whereas unstable trajectories show greater variability and dispersion.}
    \label{fig:tsa_traj}
\end{figure*}

\paragraph{Data Split}
We adopt a stable-versus-unstable testing protocol. Among $12{,}852$ trajectories, $7{,}460$ are stable and $5{,}392$ are unstable. 

For training, we sample $5{,}000$ trajectories consisting of $90\%$ stable and $10\%$ unstable samples, resulting in $4{,}500$ stable and $500$ unstable trajectories. 
The remaining stable and unstable samples are reserved for evaluation.

At test time, we construct evaluation sets under both the null regime ($100\%$ stable) and shifted regimes by varying the proportion of unstable trajectories (e.g., $80\%/20\%$ mixtures).

All features are normalized using statistics computed from the training set only, across both sample and time dimensions. 
The same normalization is applied to all evaluation data to avoid leakage.

\paragraph{Trajectory Characteristics.}
To provide intuition on the TSA dataset, we visualize a subset of trajectories from generator G01 across five representative variables: active power, excitation current, rotor angle, rotor speed, and terminal voltage. For each variable, we randomly sample 10 stable and 10 unstable trajectories.

As shown in Figure~\ref{fig:tsa_traj}, stable trajectories exhibit relatively consistent patterns in both magnitude and temporal dynamics, forming a concentrated and structured region in trajectory space. In contrast, unstable trajectories display significantly higher variability, with diverse shapes, amplitudes, and temporal behaviors.

\paragraph{Context Construction}
Each trajectory has length $60$ with $50$ features per time step. 
In our setting, the exogenous context is defined as the first observation of each trajectory:
\[
X_i = s_{i,1} \in \mathbb{R}^{50}.
\]
This corresponds to using the initial system state as the context variable. Since trajectories in this dataset begin immediately after fault clearance, this observation represents the post-fault system state at the onset of transient dynamics.
The remaining trajectory is used to construct a single training pair per sample.
Specifically, each trajectory is evenly split into two halves:
\[
o^{\mathrm{hist}}_i = (s_{i,1}, \dots, s_{i,T/2}), \quad
o^{\mathrm{next}}_i = (s_{i,T/2+1}, \dots, s_{i,T}),
\]
where $T=60$. 
This yields one $(o^{\mathrm{hist}}, o^{\mathrm{next}})$ pair per trajectory.

\paragraph{Model Training}
We train a conditional GRU model to predict future dynamics conditioned on the context and past trajectory segment. 
The latent dimension is set to $d_v=1$, and models are trained using Adam with standard hyperparameters. 
The latent representation is regularized toward a uniform distribution via an MMD-based objective. 
All models are trained on GPU (CUDA).

\paragraph{Evaluation Protocol}
At test time, each trajectory is mapped to a latent representation using only its context $X_i$. 
We sample a reference set of $1{,}000$ stable trajectories from the held-out stable pool to represent the baseline latent distribution.
To perform hypothesis testing, we repeatedly construct reference and test batches. 
Each batch consists of $512$ samples drawn with replacement. 
The reference batch is sampled from the reference stable pool, while the test batch is constructed from a mixture of stable and unstable samples according to a specified ratio.
We perform two-sample tests in the latent space, comparing the reference and test batches using either MMD-based statistics or Welch's t-test on predicted probabilities. 
For each setting, the testing procedure is repeated $200$ times, and we report the empirical rejection rate.

\end{document}